\let\NAT@parse\undefined
\begin{document}
\title{Technical Report: Reactive Navigation in Partially Known Non-Convex Environments\thanks{This work was supported by AFRL grant FA865015D1845 (subcontract 6697371). The authors thank Dr. Omur Arslan for many formative discussions and for sharing his simulation and presentation infrastructure.}}
\titlerunning{Technical Report}
%
\author{Vasileios Vasilopoulos\inst{1} \and
Daniel E. Koditschek\inst{2}}
%
%
\institute{Department of Mechanical Engineering and Applied Mechanics, University of Pennsylvania, Philadelphia, PA 19104 \and
Department of Electrical and Systems Engineering, University of Pennsylvania, Philadelphia, PA 19104\\
\email{\{vvasilo,kod\}@seas.upenn.edu}}
\maketitle              
\begin{abstract}
This paper presents a provably correct method for robot navigation in 2D environments cluttered with familiar but unexpected non-convex, star-shaped obstacles as well as completely unknown, convex obstacles. We presuppose a limited range onboard sensor, capable of recognizing, localizing and (leveraging ideas from constructive solid geometry) generating online from its catalogue of the familiar, non-convex shapes an implicit representation of each one. These representations underlie an online change of coordinates to a completely convex model planning space wherein a previously developed online construction yields a provably correct reactive controller that is pulled back to the physically sensed representation to generate the actual robot commands. We extend the construction to differential drive robots, and suggest the empirical utility of the proposed control architecture using both formal proofs and numerical simulations.
\keywords{Motion and Path Planning \and Collision Avoidance \and Vision and Sensor-based Control.}
\end{abstract}

\allowdisplaybreaks

\section{Introduction}

\subsection{Motivation and Prior Work}
Recent advances in the theory of sensor-based reactive navigation \cite{arslan_kod_WAFR2016} and its application to wheeled \cite{Arslan_Koditschek_2018} and legged \cite{vasilopoulos2017} robots promote its central role in provably correct architectures for increasingly complicated mobile manipulation tasks \cite{vasilopoulos2018,Vasilopoulos_Topping_Vega-Brown_Roy_Koditschek_2018}. The advance of the new theory \cite{arslan_kod_WAFR2016} over prior sensor-based collision avoidance schemes \cite{paranjape_etal_IJRR2015,johnson_hale_haynes_kod_SSRR2011,simmons_ICRA1996,fiorini_shiller_IJRR1998,vandenberg_guy_lin_manocha_ISRR2011,vandenberg_lin_manocha_ICRA2008,brock_khatib_ICRA1999,borenstein_koren_TRA1991,borenstein_koren_TSMC1989} was the additional guaranteed convergence to a designated goal which had theretofore only been established for reactive planners possessing substantial prior knowledge about the environment \cite{Majumdar_Tedrake_2017,rimon1992}. A key feature of these new (and other recent parallel \cite{Paternain_Koditschek_Ribeiro_2017,Ilhan_Johnson_Koditschek_2018}) approaches is that they trade away prior knowledge for the presumption of simplicity: unknown obstacles can be successfully negotiated in real time without losing global convergence guarantees if they are ``round'' (i.e., very strongly convex in a sense made precise in \cite{Arslan_Koditschek_2018}). The likely necessity of such simple geometry  for guaranteed safe convergence by a completely uninformed ``greedy'' reactive navigation planner is suggested by the result that a collision avoiding, distance-diminishing reactive navigation policy can reach arbitrarily placed goals in an unknown freespace {\em only if} all obstacles are ``round'' \cite[Proposition 14]{Arslan_Koditschek_2018}.

This paper offers a step toward elucidating the manner in which partial knowledge may suffice to inform safe, convergent, reactive navigation in geometrically more interesting environments. Growing experience negotiating learned \cite{henry_vollmer_ferris_fox_ICRA2010} or estimated \cite{karaman_frazzoli_ICRA2012,trautman_ma_murray_krause_IJRR2015} environments suggests that reasonable statistical priors may go a long way toward provable stochastic navigation. But in this work we are interested in what sort of deterministic guarantees may be possible. Recent developments in semantic SLAM \cite{Bowman2017} and object pose and triangular mesh extraction using convolutional neural net architectures \cite{Kar_Tulsiani_Carreira_Malik_2015,Kong_Lin_Lucey_2017,Pavlakos2017} now provide an avenue for incorporating partial prior knowledge within a deterministic framework well suited to the vector field planning methods reviewed above.

\subsection{Contributions and Organization of the Paper}
We consider the navigation problem in a 2D workspace cluttered with unknown convex obstacles, along with ``familiar'' non-convex, star-shaped obstacles \cite{Rimon_Koditschek_1989} that belong to classes of known geometries, but whose number and placement are unknown, awaiting discovery at execution time. We assume a limited range onboard sensor, a sufficient  margin separating all obstacles from each other and the goal, and a catalogue of known star-shaped sets, along with a ``mapping oracle'' for their online identification and localization in the physical workspace. These ingredients suggest a representation of the environment taking the form of a ``multi-layer'' triple of topological spaces whose realtime interaction can be exploited to integrate the geometrically naive sensor driven methods of \cite{arslan_kod_WAFR2016} with the offline memorized geometry sensitive methods of \cite{rimon1992}. Specifically, we adapt the construction of \cite{Rimon_Koditschek_1989} to generate a realtime smooth change of coordinates (a {\em diffeomorphism}) of the mapped layer of the environment into a (locally) topologically equivalent but geometrically more favorable model layer relative to which the reactive methods of \cite{arslan_kod_WAFR2016} can be directly applied. We prove that the conjugate vector field defined by pulling back the reactive model space planner through this diffeomorphism induces a vector field on the robot's physical configuration space that inherits the same formal guarantees of obstacle avoidance and convergence. We extend the construction to the case of a differential drive robot, by pulling back the extended field over planar rigid transformations introduced for this purpose in \cite{arslan_kod_WAFR2016} through a suitable polar coordinate transformation of the tangent lift of our original planar diffeomorphism and demonstrate, once again, that the physical differential drive robot inherits the same obstacle avoidance and convergence properties as those guaranteed for the geometrically simple model robot \cite{arslan_kod_WAFR2016}. Finally, to better support online implementation of these constructions, we adopt modular methods for implicit description of geometric shape \cite{shapiro2007}.

The paper is organized as follows. Section \ref{sec:problem_formulation} describes the problem and establishes our assumptions. Section \ref{sec:geometric_transformation} describes the physical, mapped and model planning layers used in the constructed diffeomorphism between the mapped and model layers, whose properties are established next. Based on these results, Section \ref{sec:reactive_controller} describes our control approach both for fully actuated and differential drive robots. Section \ref{sec:simulations} presents a variety of illustrative numerical studies and Section \ref{sec:conclusion} concludes by summarizing our findings and presenting ideas for future work. Finally, Appendix \ref{appendix:proofs} includes the proofs of our main results, Appendix \ref{appendix:rfunctions} sketches the ideas from computational geometry \cite{shapiro2007} underlying our modular construction of implicit representations of polygonal obstacles, and Appendix \ref{appendix:calculation_jacobian} includes some technical details on the calculation of the diffeomorphism jacobian for differential drive robots.

\section{Problem Formulation}
\label{sec:problem_formulation}

We consider a disk-shaped robot with radius $r>0$, centered at $\mathbf{x} \in \mathbb{R}^2$, navigating a closed, compact workspace $\mathcal{W} \subset \mathbb{R}^2$, with known convex boundary $\partial \mathcal{W}$. The robot is assumed to possess a sensor with fixed range $R$, capable of recognizing ``familiar'' objects, as well as estimating the distance of the robot to nearby obstacles\footnote{We refer the reader to an example of existing technology \cite{pointcloud_to_lidar} generating 2D LIDAR scans from 3D point clouds for such an approach.}. 

The workspace is cluttered by an unknown number of fixed, disjoint obstacles, denoted by $\mathcal{O}:=(O_1,O_2,\ldots)$. We adopt the notation in \cite{arslan_kod_WAFR2016} and define the {\em freespace} as
\begin{equation}
\mathcal{F} := \left\{ \mathbf{x} \in \mathcal{W} \, \Big| \, \overline{B(\mathbf{x},r)} \subseteq \mathcal{W} \, \backslash \, \bigcup_i O_i \right\} \label{eq:free_space}
\end{equation}
where $B(\mathbf{x},r)$ is the open ball centered at $\mathbf{x}$ with radius $r$, and $\overline{B(\mathbf{x},r)}$ denotes its closure. To simplify our notation, we neglect the robot dimensions, by dilating each obstacle in $\mathcal{O}$ by $r$, and assume that the robot operates in $\mathcal{F}$. We denote the set of dilated obstacles by $\tilde{\mathcal{O}}$.

Although none of the positions of any obstacles in $\tilde{\mathcal{O}}$ are \`{a}-priori known, a subset $\tilde{\mathcal{O}}^* \subseteq \tilde{\mathcal{O}}$ of these obstacles is assumed to be ``familiar'' in the sense of having an \`{a}-priori known, readily recognizable star-shaped geometry \cite{Rimon_Koditschek_1989} (i.e., belonging to a known catalogue of star-shaped {\it geometry classes}), which the robot can efficiently identify and localize instantaneously from online sensory measurement. Although the implementation of such a sensory apparatus lies well beyond the scope of the present paper, recent work on semantic SLAM \cite{Bowman2017} provides an excellent example with empirically demonstrated technology for achieving this need for localizing, identifying and keeping track of all the familiar obstacles encountered in the otherwise unknown environment. The \`{a}-priori unknown center of each catalogued star-shaped obstacle $\tilde{O}^*_i$ is denoted $\mathbf{x}^*_i$. Similarly to \cite{rimon1992}, each star-shaped obstacle $\tilde{O}^*_i \in \tilde{\mathcal{O}}^*$ can be described by an {\it obstacle function}, a real-valued map providing an implicit representation of the form
\begin{equation}
\tilde{O}^*_i = \{ \mathbf{x} \in \mathbb{R}^2 \, | \, \beta_i(\mathbf{x}) \leq 0 \}
\end{equation}
which the robot must construct online from the catalogued geometry, after it has localized $\tilde{O}^*_i$. The remaining obstacles $\tilde{\mathcal{O}}_{convex}:=\tilde{\mathcal{O}}\backslash\tilde{\mathcal{O}}^*$ are are assumed to be strictly convex but are in all other regards (location and specific shape) completely unknown to the robot, while nevertheless satisfying a curvature condition given in \cite[Assumption 2]{arslan_kod_WAFR2016}. 

For the obstacle functions, we require the technical assumptions introduced in \cite[Appendix III]{rimon1992}, outlined as follows.
\begin{assumption} \label{assumption:epsilon}
The obstacle functions satisfy the following requirements
\begin{enumerate}[label=\alph*)]
\item For each $\tilde{O}^*_i \in \tilde{\mathcal{O}}^*$, there exists $\varepsilon_{1i}>0$ such that for any two obstacles $\tilde{O}^*_i,\tilde{O}^*_j \in \tilde{\mathcal{O}}^*$
\begin{equation}
\{\mathbf{x} \, | \, \beta_i(\mathbf{x}) \leq \varepsilon_{1i} \} \bigcap \{\mathbf{x} \, | \, \beta_j(\mathbf{x}) \leq \varepsilon_{1j} \} = \emptyset
\end{equation}
i.e., the ``thickened boundaries'' of any two stars still do not overlap.
\item For each $\tilde{O}^*_i \in \tilde{\mathcal{O}}^*$, there exists $\varepsilon_{2i}>0$ such that the set $\{\mathbf{x} \, | \, \beta_i(\mathbf{x}) \leq \varepsilon_{2i} \}$ does not contain the goal $\mathbf{x}_d \in \mathcal{F}$ and does not intersect with any other obstacle in $\tilde{\mathcal{O}}_{convex}$.
\item For each obstacle function $\beta_i$, there exists a pair of positive constants $(\delta_i,\varepsilon_{3i})$ satisfying the inner product condition\footnote{A brief discussion on this condition is given in Appendix \ref{appendix:rfunctions}.}
\begin{equation}
(\mathbf{x}-\mathbf{x}^*_i)^\top \nabla \beta_i(\mathbf{x}) \geq \delta_i
\end{equation}
for all $\mathbf{x} \in \mathbb{R}^2$ such that $\beta_i(\mathbf{x}) \leq \varepsilon_{3i}$.
\end{enumerate}
\end{assumption}
For each obstacle $\tilde{O}^*_i \in \tilde{\mathcal{O}}^*$, we then define $\varepsilon_i = \min\{\varepsilon_{1i},\varepsilon_{2i},\varepsilon_{3i}\}$. Finally, we will assume that the range of the sensor $R$ satisfies $R >> \varepsilon_i$ for all $i$.

Based on these assumptions and further positing first-order, fully-actuated robot dynamics $\dot{\mathbf{x}} = \mathbf{u}(\mathbf{x})$, the problem consists of finding a Lipschitz continuous controller $\mathbf{u}:\mathcal{F} \rightarrow \mathbb{R}^2$, that leaves the freespace $\mathcal{F}$ positively invariant and asymptotically steers almost all configurations in $\mathcal{F}$ to the given goal $\mathbf{x}_d \in \mathcal{F}$.

\section{Multi-layer Representation of the Environment and Its Associated Transformations}
\label{sec:geometric_transformation}
In this Section, we introduce associated notation for, and transformations between three distinct representations of the environment that we will refer to as planning ``layers'' and use in the construction of our algorithm. Fig. \ref{fig:diffeo_idea} illustrates the role of these layers and the transformations that relate them in constructing and analyzing a realtime generated vector field that guarantees safe passage to the goal. The new technical contribution is an adaptation  of the methods of \cite{rimon1992} to the construction of a diffeomorphism, $\mathbf{h}$, where the requirement for fast, online performance demands an algorithm that is as simple as possible and with few tunable parameters. Hence, since the reactive controller in \cite{arslan_kod_WAFR2016} is designed to (provably) handle convex shapes, sensed  obstacles not recognized by the semantic SLAM process are simply assumed to be convex (implemented by designing $\mathbf{h}$ to resolve to the identity transformation in the  neighborhood of ``unfamiliar'' objects) and the control response defaults to that prior construction.  

\begin{figure}[t]
\centering
\includegraphics[width=\textwidth]{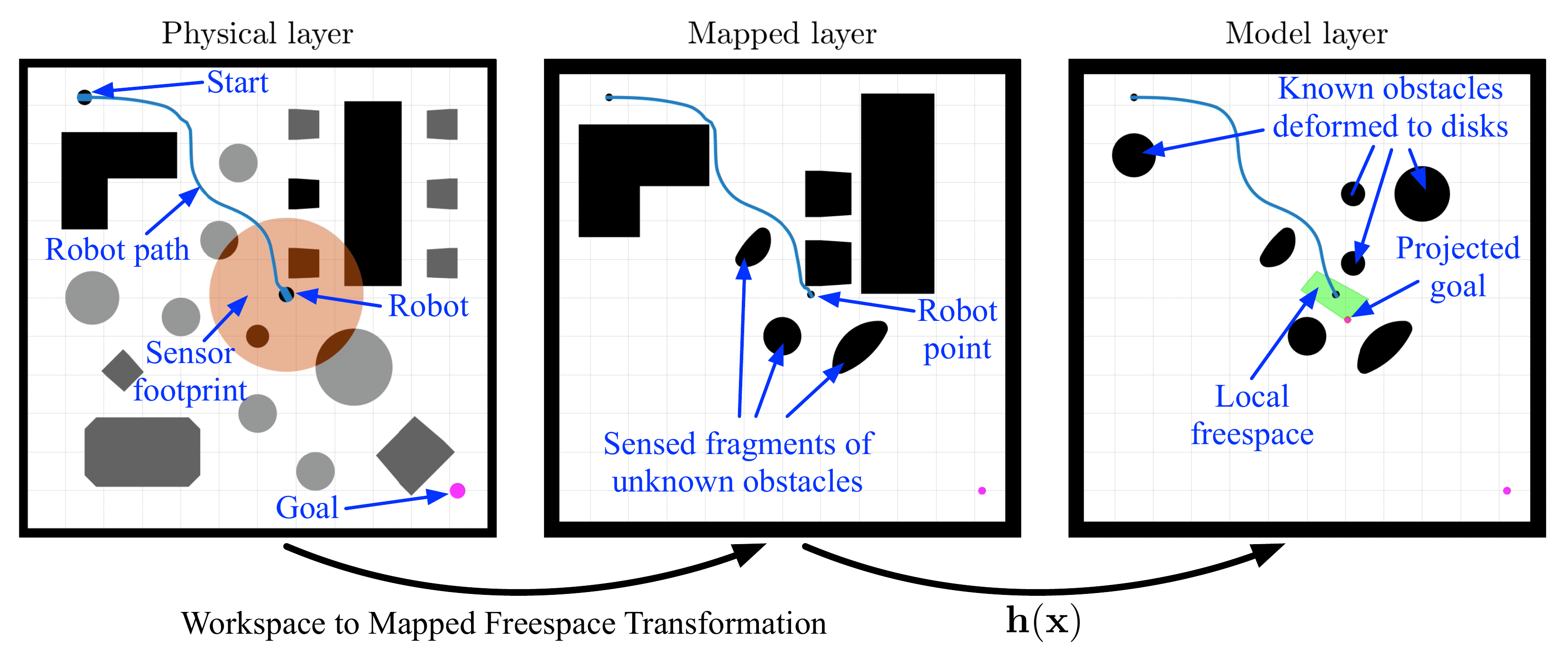}
\caption{Snapshot Illustration of Key Ideas. The robot in the physical layer (left frame, depicting in blue the robot's placement in the workspace along with the prior trajectory of its centroid) containing both familiar objects of known geometry but unknown location (dark grey) and unknown obstacles (light grey), moves towards a goal and discovers obstacles (black) with an onboard sensor of limited range (orange disk). These obstacles are localized and stored permanently in the mapped layer (middle frame, depicting in blue the robot's placement as a point in freespace rather than its body in the workspace) if they have familiar geometry or temporarily, with just the corresponding sensed fragments, if they are unknown. An online map $\mathbf{h}(\mathbf{x})$ is then constructed (Section \ref{sec:geometric_transformation}), from the mapped layer to a geometrically simple model layer (right frame, now depicting the robot's placement and prior tractory amongst the $\mathbf{h}$-deformed convex images of the mapped obstacles). A doubly reactive control scheme for convex environments \cite{arslan_kod_WAFR2016} defines a vector field on the model layer which is pulled back in realtime through the diffeomorphism to generate the input in the physical layer (Section \ref{sec:reactive_controller}).} \label{fig:diffeo_idea}
\end{figure}

\subsection{Description of Planning Layers}
\subsubsection{Physical Layer}
The {\it physical layer} is a complete description of the geometry of the unknown actual world and while inaccessible to the robot is used for purposes of analysis. It describes the actual workspace $\mathcal{W}$, punctured with the obstacles $\mathcal{O}$. This gives rise to the freespace $\mathcal{F}$, given in \eqref{eq:free_space}, consisting of all placements of the robot's centroid that entail no intersections of its body with any obstacles. The robot navigates this layer, and discovers and localizes new obstacles, which are then stored in its {\it semantic map} if their geometry is familiar.

\subsubsection{Mapped Layer}
The {\it mapped layer} $\mathcal{F}_{map}$ has the same boundary as $\mathcal{F}$ (i.e. $\partial \mathcal{F}_{map}:=\partial \mathcal{F}$) and records the robot's evolving information about the environment aggregated from the raw sensor data about the observable portions of $N \geq 0$ unrecognized (and therefore, presumed convex) obstacles $\{\tilde{O}_1,\ldots,\tilde{O}_N\} \subseteq \tilde{\mathcal{O}}_{convex}$, together with the inferred star centers $\mathbf{x}^*_j$ and obstacle functions $\beta_j$ of $M \geq 0$ star-shaped obstacles $\{\tilde{O}^*_1,\ldots,\tilde{O}^*_M\} \subseteq \tilde{\mathcal{O}}^*$, that are instantiated at the moment the sensory data triggers the ``memory'' that identifies and localizes a familiar obstacle. It is important to note that the star environment is constantly updated, both by discovering and storing new star-shaped obstacles in the semantic map and by discarding old information and storing new information regarding obstacles in $\tilde{\mathcal{O}}_{convex}$. In this representation, the robot is treated as a point particle, since all obstacles are dilated by $r$ in the passage from the workspace to the freespace representation of valid placements.

\subsubsection{Model Layer}
The {\it model layer} $\mathcal{F}_{model}$ has the same boundary as $\mathcal{F}$ (i.e. $\partial \mathcal{F}_{model}:=\partial \mathcal{F}$) and consists of a collection of $M$ Euclidean disks, each centered at one of the mapped star centers, $\mathbf{x}^*_j, j=1,\ldots,M$, and copies of the sensed fragments of the $N$ unrecognized visible convex obstacles in $\tilde{\mathcal{O}}_{convex}$. The radii $\{\rho_1,\ldots,\rho_M\}$ of the $M$ disks are chosen so that $\overline{B(\mathbf{x}^*_j,\rho_j)} \subseteq \{ \mathbf{x} \, | \, \beta_j(\mathbf{x}) < 0 \}$, as in \cite{rimon1992}.

This metric convex sphere world comprises the data generating the doubly reactive algorithm of \cite{arslan_kod_WAFR2016}, which will be applied to the physical robot via the online generated change of coordinates between the mapped layer and the model layer to be now constructed.

\subsection{Description of the $C^\infty$ Switches}
In order to simplify the diffeomorphism construction, we depart from the construction of analytic switches \cite{Rimon_Koditschek_1989} and rely instead on the $C^\infty$ function $\zeta:\mathbb{R} \rightarrow \mathbb{R}$ \cite{hirsch_1976} described by
\begin{equation}
\zeta(\chi) = \left\{ \begin{matrix}
e^{-1/\chi}, & \quad \chi>0 \\
0,  & \quad \chi \leq 0
\end{matrix}\right.
\end{equation}
with derivative
\begin{equation} \label{eq:zeta_derivative}
\zeta'(\chi) = \left\{ \begin{matrix}
\frac{\zeta(\chi)}{\chi^{2}}, & \quad \chi>0 \\
0,  & \quad \chi \leq 0
\end{matrix}\right.
\end{equation}
Based on that function, we can then define the $C^\infty$ switches for each star-shaped obstacle $\tilde{O}^*_j$ in the semantic map as
\begin{equation}
\sigma_j(\mathbf{x}) = \eta_j \circ \beta_j(\mathbf{x}), \quad j=1,\ldots,M \label{eq:sigmaj}
\end{equation}
with $\eta_j(\chi) = \zeta(\varepsilon_j-\chi)/\zeta(\varepsilon_j)$ and $\varepsilon_j$ given according to Assumption \ref{assumption:epsilon}. The gradient of the switch $\sigma_j$ is given by
\begin{equation} \label{eq:grad_sigmaj}
\nabla \sigma_j(\mathbf{x}) = (\eta_j' \circ \beta_j(\mathbf{x})) \cdot \nabla \beta_j(\mathbf{x})
\end{equation}
Finally, we define
\begin{equation}
\sigma_d(\mathbf{x}) = 1-\sum_{j=1}^M \sigma_j(\mathbf{x})
\end{equation}

Using the above construction, it is easy to see that $\sigma_j(\mathbf{x}) = 1$ on the boundary of the $j$-th obstacle and $\sigma_j(\mathbf{x}) = 0$ when $\beta_j(\mathbf{x}) > \varepsilon_j$ for each $j=1,\ldots,M$. Based on Assumption \ref{assumption:epsilon} and the choice of $\varepsilon_j$ for each $j$, we are, therefore, led to the following results.
\begin{lemma} \label{lemma:nonzero_switches}
At any point $\mathbf{x} \in \mathcal{F}_{map}$, at most one of the switches $\{ \sigma_1, \ldots, \sigma_M \}$ can be nonzero.
\end{lemma}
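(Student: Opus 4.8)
The plan is to reduce the statement to a disjointness property of sublevel sets of the obstacle functions, which is exactly what Assumption \ref{assumption:epsilon}(a) supplies. First I would pin down the support of each individual switch. Since $\zeta(\chi)>0$ precisely when $\chi>0$ and $\zeta(\chi)=0$ otherwise, and $\eta_j(\chi)=\zeta(\varepsilon_j-\chi)/\zeta(\varepsilon_j)$ with $\zeta(\varepsilon_j)>0$, we get $\eta_j(\chi)\neq 0$ iff $\varepsilon_j-\chi>0$. Composing with $\beta_j$ via \eqref{eq:sigmaj} yields the key characterization: $\sigma_j(\mathbf{x})\neq 0$ if and only if $\beta_j(\mathbf{x})<\varepsilon_j$; in particular $\sigma_j(\mathbf{x})\neq 0$ implies $\mathbf{x}\in\{\mathbf{x}\mid\beta_j(\mathbf{x})\leq\varepsilon_j\}$ (the boundary case $\beta_j(\mathbf{x})=\varepsilon_j$ giving $\sigma_j(\mathbf{x})=0$ is harmless and only needs a one-line remark).

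Next I would use the definition $\varepsilon_j=\min\{\varepsilon_{1j},\varepsilon_{2j},\varepsilon_{3j}\}$, so that $\varepsilon_j\leq\varepsilon_{1j}$ and hence the nesting $\{\mathbf{x}\mid\beta_j(\mathbf{x})\leq\varepsilon_j\}\subseteq\{\mathbf{x}\mid\beta_j(\mathbf{x})\leq\varepsilon_{1j}\}$ holds for every $j=1,\dots,M$. Assumption \ref{assumption:epsilon}(a) states that $\{\mathbf{x}\mid\beta_i(\mathbf{x})\leq\varepsilon_{1i}\}\cap\{\mathbf{x}\mid\beta_j(\mathbf{x})\leq\varepsilon_{1j}\}=\emptyset$ for $i\neq j$; combined with the nesting this forces the smaller sets $\{\mathbf{x}\mid\beta_j(\mathbf{x})\leq\varepsilon_j\}$, $j=1,\dots,M$, to be pairwise disjoint as well.

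Finally I would assemble the pieces by contradiction: if two distinct switches $\sigma_i(\mathbf{x})$ and $\sigma_j(\mathbf{x})$ were simultaneously nonzero at some $\mathbf{x}\in\mathcal{F}_{map}$, then $\mathbf{x}$ would lie in both $\{\mathbf{x}\mid\beta_i(\mathbf{x})\leq\varepsilon_i\}$ and $\{\mathbf{x}\mid\beta_j(\mathbf{x})\leq\varepsilon_j\}$, contradicting their disjointness. Hence at most one of $\sigma_1,\dots,\sigma_M$ is nonzero at any point. I do not anticipate a genuine obstacle here; the only thing to be careful about is making explicit the two elementary set inclusions (support of $\sigma_j$ inside the $\varepsilon_j$-sublevel set, and $\varepsilon_j$-sublevel set inside the $\varepsilon_{1j}$-sublevel set) before invoking Assumption \ref{assumption:epsilon}(a), since the lemma is essentially a bookkeeping consequence of that assumption together with the choice of $\varepsilon_j$.
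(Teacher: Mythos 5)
Your proposal is correct and follows exactly the argument the paper sketches (but leaves implicit) in the paragraph preceding the lemma: characterize the support of $\sigma_j$ as $\{\beta_j < \varepsilon_j\}$, note $\varepsilon_j \leq \varepsilon_{1j}$, and invoke Assumption \ref{assumption:epsilon}(a) for pairwise disjointness. No gaps; the bookkeeping on the boundary case $\beta_j(\mathbf{x})=\varepsilon_j$ is handled appropriately.
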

\begin{corollary}
The set $\left\{\sigma_1, \ldots, \sigma_M, \sigma_d \right\}$ defines a partition of unity over $\mathcal{F}_{map}$.
\end{corollary}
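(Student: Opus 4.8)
The plan is to prove that $\{\sigma_1,\ldots,\sigma_M,\sigma_d\}$ is a partition of unity over $\mathcal{F}_{map}$ by verifying the two defining conditions: (i) each function takes values in $[0,1]$ and they sum identically to $1$; (ii) the collection is locally finite with the relevant support and smoothness properties. Condition (i) for the sum is immediate by construction, since $\sigma_d = 1 - \sum_{j=1}^M \sigma_j$ forces $\sigma_d + \sum_{j=1}^M \sigma_j \equiv 1$ on all of $\mathcal{F}_{map}$; the real content is showing each summand lies in $[0,1]$ and hence that $\sigma_d \geq 0$ as well.

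First I would establish that $0 \le \sigma_j(\mathbf{x}) \le 1$ for every $j$ and every $\mathbf{x} \in \mathcal{F}_{map}$. Recalling $\sigma_j = \eta_j \circ \beta_j$ with $\eta_j(\chi) = \zeta(\varepsilon_j - \chi)/\zeta(\varepsilon_j)$ and $\zeta \ge 0$, we get $\sigma_j \ge 0$ everywhere. For the upper bound, note $\zeta$ is nondecreasing (its derivative in \eqref{eq:zeta_derivative} is nonnegative), so $\zeta(\varepsilon_j - \chi) \le \zeta(\varepsilon_j)$ whenever $\chi \ge 0$, i.e. whenever $\beta_j(\mathbf{x}) \ge 0$; since $\mathcal{F}_{map}$ excludes the interiors of the obstacles, $\beta_j \ge 0$ there and thus $\sigma_j \le 1$ on $\mathcal{F}_{map}$. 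Next I would invoke Lemma \ref{lemma:nonzero_switches}: at any $\mathbf{x}\in\mathcal{F}_{map}$ at most one $\sigma_j$ is nonzero, so $\sum_{j=1}^M \sigma_j(\mathbf{x})$ equals either $0$ or a single $\sigma_j(\mathbf{x}) \in [0,1]$; in either case the sum lies in $[0,1]$, which yields $\sigma_d(\mathbf{x}) = 1 - \sum_j \sigma_j(\mathbf{x}) \in [0,1]$ as well. Thus all $M+1$ functions are nonnegative, bounded by $1$, and sum to $1$.

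It remains to record the regularity and local-finiteness requirements. Each $\beta_j$ is smooth and $\eta_j$ is $C^\infty$ because it is built from the $C^\infty$ bump-type function $\zeta$, so each $\sigma_j$ is $C^\infty$, and therefore so is $\sigma_d$; this matches the $C^\infty$ category in which the diffeomorphism construction will be carried out. For local finiteness and the support condition, observe that $\sigma_j$ vanishes wherever $\beta_j(\mathbf{x}) > \varepsilon_j$ (since then $\varepsilon_j - \beta_j < 0$ and $\zeta$ of a nonpositive argument is $0$), so $\mathrm{supp}\,\sigma_j \subseteq \{\mathbf{x}\,|\,\beta_j(\mathbf{x}) \le \varepsilon_j\}$; by Assumption \ref{assumption:epsilon}(a) and the choice $\varepsilon_j \le \varepsilon_{1j}$ these supports are pairwise disjoint, so in particular only finitely many (indeed at most one, plus $\sigma_d$) are nonzero near any point — the collection is trivially locally finite since $M$ is finite. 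Assembling these observations gives the claim.

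The main obstacle, such as it is, is simply organizing the monotonicity argument for $\zeta$ cleanly and being careful that $\mathbf{x} \in \mathcal{F}_{map}$ indeed guarantees $\beta_j(\mathbf{x}) \ge 0$ for all $j$ (so that the upper bound $\sigma_j \le 1$ holds on the relevant domain rather than merely on the obstacle boundaries); once those two points are pinned down, everything else follows directly from the definitions and from Lemma \ref{lemma:nonzero_switches}. No deep machinery is needed — this corollary is essentially a bookkeeping consequence of the preceding lemma together with elementary properties of $\zeta$.
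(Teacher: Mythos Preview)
Your proof is correct. The paper itself offers no proof of this corollary at all; it is stated as an immediate consequence of Lemma~\ref{lemma:nonzero_switches} together with the definition $\sigma_d = 1 - \sum_j \sigma_j$, and the authors leave the routine verification (nonnegativity, the upper bound $\sigma_j \le 1$ on $\mathcal{F}_{map}$, smoothness, local finiteness) implicit. Your write-up simply unpacks those implicit steps carefully --- in particular the monotonicity of $\zeta$ and the observation that $\beta_j \ge 0$ on $\mathcal{F}_{map}$ --- so the approach is the same as the paper's, just made explicit.
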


\subsection{Description of the Star Deforming Factors}
The deforming factors are the functions $\nu_j(\mathbf{x}):\mathcal{F}_{map} \rightarrow \mathbb{R}, j=1,\ldots,M$, responsible for transforming each star-shaped obstacle into a disk in $\mathbb{R}^2$. Once again, we use here a slightly different construction than \cite{Rimon_Koditschek_1989}, in that the value of each deforming factor $\nu_j$ at a point $\mathbf{x}$ does not depend on the value of $\beta_j(\mathbf{x})$. Namely, the deforming factors are given based on the desired final radii $\rho_j, j=1,\ldots,M$ as
\begin{equation}
\nu_j(\mathbf{x}) = \frac{\rho_j}{||\mathbf{x}-\mathbf{x}^*_j||} \label{eq:deforming_factors}
\end{equation}
We also get
\begin{equation}
\nabla \nu_j(\mathbf{x}) = - \frac{\rho_j}{||\mathbf{x}-\mathbf{x}^*_j||^3} (\mathbf{x}-\mathbf{x}^*_j) \label{eq:grad_nu}
\end{equation}

\subsection{The Map Between the Mapped and the Model Layer}
\subsubsection{Construction}
The map for $M$ star-shaped obstacles centered at $\mathbf{x}^*_j, j = 1, \ldots, M$ is described by a function $\mathbf{h}: \mathcal{F}_{map} \rightarrow \mathcal{F}_{model}$ given by
\begin{equation}
\mathbf{h}(\mathbf{x}) = \sum_{j=1}^M \sigma_j(\mathbf{x})\left[\nu_j(\mathbf{x})(\mathbf{x}-\mathbf{x}^*_j) + \mathbf{x}^*_j \right] + \sigma_d(\mathbf{x}) \mathbf{x} \label{eq:map}
\end{equation}

Note that the $N$ visible convex obstacles $\{\tilde{O}_1,\ldots,\tilde{O}_N\} \subseteq \tilde{\mathcal{O}}_{convex}$ are not considered in the construction of the map. Since the reactive controller used in the model space $\mathcal{F}_{model}$ can handle convex obstacles and there is enough separation between convex and star-shaped obstacles according to Assumption \ref{assumption:epsilon}-(b), we can ``transfer'' the geometry of those obstacles directly in the model space using the identity transformation.

Finally, note that Assumption \ref{assumption:epsilon}-(b) implies that $\mathbf{h}(\mathbf{x}_d) = \mathbf{x}_d$, since the target location is assumed to be sufficiently far from all star-shaped obstacles.

Based on the construction of the map $\mathbf{h}$, the jacobian $D_\mathbf{x}\mathbf{h}$ at any point $\mathbf{x} \in \mathcal{F}_{map}$ is given by
\begin{equation}
D_\mathbf{x}\mathbf{h} = \sum_{j=1}^M \left\{ \sigma_j(\mathbf{x}) \nu_j(\mathbf{x}) \mathbf{I} + (\mathbf{x}-\mathbf{x}^*_j) \left[\sigma_j(\mathbf{x}) \nabla \nu_j(\mathbf{x})^\top + (\nu_j(\mathbf{x})-1) \nabla \sigma_j(\mathbf{x})^\top \right] \right\} + \sigma_d(\mathbf{x}) \mathbf{I} \label{eq:map_differential}
\end{equation}

\subsubsection{Qualitative Properties of the Map} 
We first verify that the construction is a smooth change of coordinates between the mapped and the model layers.
\begin{lemma} \label{lemma:smooth_map}
The map $\mathbf{h}$ from $\mathcal{F}_{map}$ to $\mathcal{F}_{model}$ is smooth.
\end{lemma}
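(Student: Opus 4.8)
Since smoothness is a local property and $\mathbf{h}$ in \eqref{eq:map} is a finite sum of $M+1$ terms, my plan is to show that each summand is a $C^\infty$ map on an open neighborhood in $\mathbb{R}^2$ of $\mathcal{F}_{map}$; the sum then inherits that regularity. Concretely, I would treat the three families of constituents in turn --- the switches $\{\sigma_j\}$ and $\sigma_d$, the deforming factors $\{\nu_j\}$, and the affine factors $\mathbf{x}\mapsto\mathbf{x}-\mathbf{x}^*_j$ --- arguing that each is individually smooth on all of $\mathcal{F}_{map}$, so that the only potentially troublesome feature, the $1/||\mathbf{x}-\mathbf{x}^*_j||$ singularity of $\nu_j$, never lies in the domain.

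For the switches, the starting point is the standard fact \cite{hirsch_1976} that the building block $\zeta$ is $C^\infty$ on $\mathbb{R}$ with every derivative vanishing at $0$, a property already visible in \eqref{eq:zeta_derivative}. Because $\varepsilon_j>0$ gives $\zeta(\varepsilon_j)>0$, the rescaled reflection $\eta_j(\chi)=\zeta(\varepsilon_j-\chi)/\zeta(\varepsilon_j)$ is a well-defined $C^\infty$ function of a real variable; composing with the (smooth, by construction --- see Assumption \ref{assumption:epsilon} and Appendix \ref{appendix:rfunctions}) obstacle function $\beta_j$ shows that $\sigma_j=\eta_j\circ\beta_j$ in \eqref{eq:sigmaj} is $C^\infty$ on $\mathbb{R}^2$, and hence so is $\sigma_d=1-\sum_{j=1}^M\sigma_j$. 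The flatness of $\zeta$ at the origin is precisely what makes $\sigma_j$ glue smoothly, with all derivatives, to the constant $0$ across the level set $\{\beta_j=\varepsilon_j\}$ where the $j$-th term deactivates, so no regularity is lost at those transitions.

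For the deforming factors, the one genuinely geometric ingredient is that the star center $\mathbf{x}^*_j$ lies in the interior of the obstacle $\tilde{O}^*_j$ and therefore $\mathbf{x}^*_j\notin\mathcal{F}_{map}$; since $\tilde{O}^*_j$ has nonempty interior, in fact $\mathrm{dist}(\mathbf{x}^*_j,\mathcal{F}_{map})>0$. Hence $\nu_j(\mathbf{x})=\rho_j/||\mathbf{x}-\mathbf{x}^*_j||$ of \eqref{eq:deforming_factors}, and with it the vector field $\mathbf{x}\mapsto\nu_j(\mathbf{x})(\mathbf{x}-\mathbf{x}^*_j)+\mathbf{x}^*_j$, is real-analytic --- in particular $C^\infty$ --- on an open neighborhood of $\mathcal{F}_{map}$. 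Each summand $\sigma_j(\mathbf{x})[\nu_j(\mathbf{x})(\mathbf{x}-\mathbf{x}^*_j)+\mathbf{x}^*_j]$ and the term $\sigma_d(\mathbf{x})\,\mathbf{x}$ is therefore a product of $C^\infty$ maps on $\mathcal{F}_{map}$, whence $\mathbf{h}\in C^\infty$ there; the $N$ sensed convex fragments never enter the discussion, since by construction $\mathbf{h}$ coincides with the identity in their vicinity (Assumption \ref{assumption:epsilon}-(b)).

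I do not anticipate a serious obstacle: the argument is essentially bookkeeping, combining the $C^\infty$ character of $\zeta$ with the chain and product rules. The only step requiring real care --- and the only place where the hypotheses do any work --- is confirming that the singular locus $\{\mathbf{x}^*_j\}$ of $\nu_j$ is separated from $\mathcal{F}_{map}$ by a positive margin, which follows from the star-shapedness of $\tilde{O}^*_j$ about $\mathbf{x}^*_j$. For the diffeomorphism property established in the sequel the same separation, together with the inner-product condition of Assumption \ref{assumption:epsilon}-(c), will be needed to control $D_\mathbf{x}\mathbf{h}$ in \eqref{eq:map_differential}; for mere smoothness it is enough.
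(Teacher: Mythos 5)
Your argument correctly establishes the pointwise smoothness of $\mathbf{h}$ for a fixed inventory of obstacles, and does so more carefully than the paper itself: you identify the flatness of $\zeta$ at zero as the mechanism by which $\sigma_j$ glues smoothly, with all derivatives, to the constant $0$ across $\{\beta_j = \varepsilon_j\}$, and you correctly observe that the only singularity of $\nu_j$ is at the star center $\mathbf{x}^*_j$, which lies in the obstacle interior and hence at positive distance from $\mathcal{F}_{map}$. One small caveat: the R-function construction of Appendix~\ref{appendix:rfunctions} renders $\beta_j$ analytic only away from polygon vertices, which sit on $\partial\mathcal{F}_{map}$; the paper concedes this and weakens the eventual claim to ``$C^\infty$ away from the polygon vertices.'' Citing Assumption~\ref{assumption:epsilon} as a source of smoothness of $\beta_j$ is also a minor misattribution --- that assumption concerns separation and the inner-product condition, not regularity of $\beta_j$.

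However, you have not engaged with what the paper treats as essentially the entire content of this lemma. The map $\mathbf{h}$ in \eqref{eq:map} is not fixed: the integer $M$ grows as the robot's sensor discovers new familiar obstacles, so the defining formula for $\mathbf{h}$ is rewritten mid-flight. The paper dismisses the per-term smoothness in a single sentence and devotes the rest of its proof to showing that this dynamic rewriting cannot inject a discontinuity into the induced vector field. The mechanism is exactly the flatness you invoke for a different purpose: because $R \gg \varepsilon_k$, at the instant obstacle $k$ is first detected the robot's position lies outside $\{\mathbf{q}\in\mathcal{F}_{map}\mid 0\leq\beta_k(\mathbf{q})<\varepsilon_k\}$, so the newly appended term has $\sigma_k\equiv 0$ in a neighborhood of the current $\mathbf{x}$, and by \eqref{eq:zeta_derivative} every derivative of $\eta_k$ (and hence of $\sigma_k$) vanishes there too. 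The enlarged map therefore agrees with the old one to infinite order at the transition, and the switch-in is seamless. Since you already isolate the $C^\infty$ flatness of $\zeta$, closing this gap is a short step; but as written your proof answers a narrower, essentially static question than the one the paper's proof is actually resolving.
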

\begin{proof}
Included in Appendix \ref{appendix_sec:section_geometric}.
\end{proof}

\begin{proposition} \label{proposition:diffeomorphism}
The map $\mathbf{h}$ is a $C^\infty$ diffeomorphism between $\mathcal{F}_{map}$ and $\mathcal{F}_{model}$.
\end{proposition}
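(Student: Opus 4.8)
The plan is to upgrade the smoothness of Lemma~\ref{lemma:smooth_map} to a diffeomorphism claim by establishing three things: (i) $\mathbf{h}$ is everywhere a local diffeomorphism (including at boundary points, where it must also carry $\partial\mathcal{F}_{map}$ into $\partial\mathcal{F}_{model}$); (ii) $\mathbf{h}$ is injective; and (iii) $\mathbf{h}$ is onto $\mathcal{F}_{model}$. A bijective local diffeomorphism between smooth manifolds with boundary is a diffeomorphism (its set-theoretic inverse is smooth because it agrees locally with the smooth local inverses), so (i)--(iii) suffice. The special radial form \eqref{eq:deforming_factors} of the deforming factors is what makes (ii) and (iii) tractable; I note in passing that compactness of $\mathcal{F}_{map}$ (closed and bounded in $\mathbb{R}^2$) makes $\mathbf{h}$ automatically proper, giving an alternative topological route to surjectivity if desired.

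\textbf{Local diffeomorphism.} By Lemma~\ref{lemma:nonzero_switches} every $\mathbf{x}\in\mathcal{F}_{map}$ has a neighborhood on which either all switches vanish, so $D_\mathbf{x}\mathbf{h}=\mathbf{I}$ by \eqref{eq:map_differential}, or exactly one switch $\sigma_j$ is nonzero, so $\sigma_d=1-\sigma_j$ and \eqref{eq:map_differential} reduces to the rank-one perturbation $D_\mathbf{x}\mathbf{h}=a\,\mathbf{I}+\mathbf{v}\mathbf{w}^\top$ with $a:=\sigma_j\nu_j+(1-\sigma_j)$, $\mathbf{v}:=\mathbf{x}-\mathbf{x}^*_j$, and $\mathbf{w}:=\sigma_j\nabla\nu_j+(\nu_j-1)\nabla\sigma_j$. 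In two dimensions $\det(a\mathbf{I}+\mathbf{v}\mathbf{w}^\top)=a\,(a+\mathbf{w}^\top\mathbf{v})$. Using \eqref{eq:grad_nu} one gets $\nabla\nu_j^\top\mathbf{v}=-\nu_j$, and using \eqref{eq:grad_sigmaj} one gets $\nabla\sigma_j^\top\mathbf{v}=(\eta_j'\circ\beta_j)(\mathbf{x}-\mathbf{x}^*_j)^\top\nabla\beta_j$, so after simplification $\det D_\mathbf{x}\mathbf{h}=a\,(1-\sigma_j+(\nu_j-1)(\eta_j'\circ\beta_j)(\mathbf{x}-\mathbf{x}^*_j)^\top\nabla\beta_j)$. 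I would then argue positivity factor by factor: $a>0$, being a convex combination of $\nu_j>0$ and $1$; $1-\sigma_j\ge0$ since $\sigma_j\le1$ on $\mathcal{F}_{map}$; $\nu_j-1<0$ since $\overline{B(\mathbf{x}^*_j,\rho_j)}\subseteq\{\beta_j<0\}$ forces $\|\mathbf{x}-\mathbf{x}^*_j\|>\rho_j$ wherever $\beta_j\ge0$; $\eta_j'\circ\beta_j\le0$ from \eqref{eq:zeta_derivative}; and $(\mathbf{x}-\mathbf{x}^*_j)^\top\nabla\beta_j\ge\delta_j>0$ on $\{\sigma_j>0\}\subseteq\{\beta_j<\varepsilon_j\le\varepsilon_{3j}\}$ by Assumption~\ref{assumption:epsilon}-(c). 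Hence the bracket is $\ge1-\sigma_j\ge0$, and it is strictly positive even when $\sigma_j=1$ (there $\beta_j=0$, so $\eta_j'(0)<0$ and $\nu_j-1<0$ strictly while the inner product is $\ge\delta_j$). Thus $\det D_\mathbf{x}\mathbf{h}>0$ everywhere, so $\mathbf{h}$ is a local diffeomorphism; and since $\mathbf{h}$ is the identity on $\partial\mathcal{W}$ and on the sensed convex fragments and maps each $\partial\tilde{O}^*_j$ radially onto $\partial B(\mathbf{x}^*_j,\rho_j)$, it carries $\partial\mathcal{F}_{map}$ into $\partial\mathcal{F}_{model}$ and the interior into the interior, so it is a local diffeomorphism in the manifold-with-boundary sense.

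\textbf{Bijectivity.} The structural point is that on the ``zone of influence'' $Z_j:=\{\beta_j\le\varepsilon_j\}\cap\mathcal{F}_{map}$ of a single star, writing $\mathbf{x}=\mathbf{x}^*_j+t\mathbf{u}$ with $\|\mathbf{u}\|=1$, \eqref{eq:deforming_factors}--\eqref{eq:map} collapse to $\mathbf{h}(\mathbf{x})=\mathbf{x}^*_j+g(t)\mathbf{u}$ with $g(t)=t+\sigma_j(\rho_j-t)$: $\mathbf{h}$ preserves each ray from $\mathbf{x}^*_j$ and acts on the radial coordinate by $g$. I would show $g$ is strictly increasing from $g'(t)=(1-\sigma_j)+\tfrac{1}{t}(\eta_j'\circ\beta_j)(\mathbf{x}-\mathbf{x}^*_j)^\top\nabla\beta_j(\rho_j-t)>0$, by the same sign bookkeeping as above. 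Along each ray the set $\{0\le\beta_j\le\varepsilon_j\}$ is a single interval $[t_\partial(\mathbf{u}),t_\varepsilon(\mathbf{u})]$, because Assumption~\ref{assumption:epsilon}-(c) makes $\beta_j$ strictly increasing along rays from $\mathbf{x}^*_j$; and $g$ maps this interval onto $[\rho_j,t_\varepsilon(\mathbf{u})]$ with $g(t_\partial)=\rho_j$ (there $\sigma_j=1$) and $g(t_\varepsilon)=t_\varepsilon$ (there $\sigma_j=0$), bijectively by monotonicity. Hence $\mathbf{h}$ maps the $Z_j$-portion of $\mathcal{F}_{map}$ bijectively onto the corresponding portion of $\mathcal{F}_{model}$ around $B(\mathbf{x}^*_j,\rho_j)$, gluing continuously to the identity on $\{\beta_j=\varepsilon_j\}$. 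Since the zones of distinct stars are pairwise disjoint by Assumption~\ref{assumption:epsilon}-(a) and $\mathbf{h}$ is the identity off their union, these local bijections assemble into a global bijection $\mathbf{h}:\mathcal{F}_{map}\to\mathcal{F}_{model}$. Combining with the previous step, $\mathbf{h}$ is a bijective local diffeomorphism, hence a $C^\infty$ diffeomorphism.

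I expect the main obstacle to be the bookkeeping in the bijectivity step rather than the local computation: one must verify carefully that the zones $Z_j$ are genuinely separated from one another, from the convex obstacles and from $\partial\mathcal{W}$ (this is exactly what Assumption~\ref{assumption:epsilon}-(a),(b) guarantee), that the radial picture and the ambient identity match up smoothly across $\{\beta_j=\varepsilon_j\}$, and that $\mathbf{h}$ never pushes a freespace point onto or inside a model disk --- all of which, as sketched, reduce to the strict monotonicity of $g$. That monotonicity, like the positivity of $\det D_\mathbf{x}\mathbf{h}$, is the one genuinely quantitative input, and it rests essentially on Assumption~\ref{assumption:epsilon}-(c) together with the inclusion $\overline{B(\mathbf{x}^*_j,\rho_j)}\subseteq\{\beta_j<0\}$; the case $\sigma_j=1$ on the star boundary, where the slack term $1-\sigma_j$ vanishes, is the delicate one and is precisely where the inner-product condition is needed.
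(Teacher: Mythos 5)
Your proof is correct, and the local-diffeomorphism computation is essentially the same as the paper's: both reduce $D_\mathbf{x}\mathbf{h}$ to a rank-one perturbation of a scalar matrix and conclude $\det D_\mathbf{x}\mathbf{h}>0$ by the same factor-by-factor sign bookkeeping, with Assumption~\ref{assumption:epsilon}-(c) carrying the case $\sigma_j=1$ where the slack term $1-\sigma_j$ vanishes. (The paper additionally checks $\mathrm{tr}\,D_\mathbf{x}\mathbf{h}_k>0$ to establish two strictly positive eigenvalues, but for nonsingularity alone your $\det>0$ suffices.) Where you genuinely diverge is in promoting the local statement to a global one. The paper follows the Rimon--Koditschek template and invokes the Massey-style topological criterion: a smooth boundary-preserving map between compact manifolds with boundary, with nonsingular differential and pairwise-homeomorphic boundary components, is a diffeomorphism; the global content is thus delegated to a cited theorem. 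You instead prove bijectivity directly by exploiting the explicit ray-preserving structure that the deforming factor \eqref{eq:deforming_factors} imposes: on each zone $Z_j=\{\beta_j\le\varepsilon_j\}$, $\mathbf{h}$ fixes each ray from $\mathbf{x}^*_j$ and acts on the radial coordinate by the strictly increasing $g(t)=t+\sigma_j(\rho_j-t)$, sending $[t_\partial,t_\varepsilon]$ onto $[\rho_j,t_\varepsilon]$, after which Assumption~\ref{assumption:epsilon}-(a),(b) separate the zones and let you glue to the identity elsewhere. Your route is more elementary (no appeal to the algebraic-topology theorem) and gives a concrete picture of where each point goes, at the cost of additional case analysis that the paper's citation absorbs; the paper's route is shorter and more robust to changes in the deforming factor, but opaque about what the global bijection actually looks like. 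One small point worth flagging: the monotonicity of $\beta_j$ along rays that you use to see $\{0\le\beta_j\le\varepsilon_j\}$ as a single interval holds only on $\{\beta_j\le\varepsilon_{3j}\}$, which is exactly the regime $Z_j$ lives in by the choice $\varepsilon_j\le\varepsilon_{3j}$ — so your argument is sound, but it would be worth stating that restriction explicitly.
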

\begin{proof}
Included in Appendix \ref{appendix_sec:section_geometric}.
\end{proof}

\subsubsection{Implicit representation of obstacles}
To implement the diffeomorphism between $\mathcal{F}_{map}$ and $\mathcal{F}_{model}$, shown in \eqref{eq:map}, we rely on the existence of a smooth obstacle function $\beta_j(\mathbf{x})$ for each star-shaped obstacle $j=1,\ldots,M$ stored in the semantic map. Since recently developed technology \cite{Pavlakos2017,Kar_Tulsiani_Carreira_Malik_2015,Kong_Lin_Lucey_2017} provides means of performing obstacle identification in the form of triangular meshes, in this work we focus on polygonal obstacles on the plane and derive implicit representations using so called ``R-functions'' from the constructive solid geometry literature \cite{shapiro2007}. In Appendix \ref{appendix:rfunctions}, we describe the method used for the construction of such implicit functions for polygonal obstacles that have the desired property of being analytic everywhere except for the polygon vertices. For the construction, we assume that the sensor has already identified, localized and included each discovered star-shaped obstacle in $\mathcal{F}_{map}$; i.e., it has determined its pose in $\mathcal{F}_{map}$, given as a rotation $\mathbf{R}_j$ of its vertices on the plane followed by a translation of its center $\mathbf{x}^*_j$, and that the corresponding polygon has already been dilated by $r$ for inclusion in $\mathcal{F}_{map}$.

\section{Reactive Controller}
\label{sec:reactive_controller}

\subsection{Reactive Controller for Fully Actuated Robots}
\subsubsection{Construction}
First, we consider a fully actuated particle with state $\mathbf{x} \in \mathcal{F}_{map}$, whose dynamics are described by
\begin{equation}
\dot{\mathbf{x}} = \mathbf{u}
\end{equation}
The dynamics of the fully actuated particle in $\mathcal{F}_{model}$ with state $\mathbf{y} \in \mathcal{F}_{model}$ are described by $\dot{\mathbf{y}} = \mathbf{v}(\mathbf{y})$ with the control $\mathbf{v}(\mathbf{y})$ given in \cite{arslan_kod_WAFR2016} as\footnote{Here $\mathrm{\Pi}_C(\mathbf{q})$ denotes the metric projection of $\mathbf{q}$ on a convex set $C$.}
\begin{equation}
\mathbf{v}(\mathbf{y}) = -k \, \left( \mathbf{y}-\mathrm{\Pi}_{\mathcal{LF}(\mathbf{y})}(\mathbf{x}_d) \right) 
\end{equation}
Here, the convex {\it local freespace} for $\mathbf{y}$, $\mathcal{LF}(\mathbf{y}) \subset \mathcal{F}_{model}$, is defined as in \cite[Eqn. (30)]{arslan_kod_WAFR2016}.
Using the diffeomorphism construction in \eqref{eq:map} and its jacobian in \eqref{eq:map_differential}, we construct our controller as the vector field $\mathbf{u}:\mathcal{F}_{map} \rightarrow T\mathcal{F}_{map}$ given by
\begin{equation}
\mathbf{u}(\mathbf{x}) = [D_\mathbf{x}\mathbf{h}]^{-1} \, \cdot \left(\mathbf{v} \circ \mathbf{h}(\mathbf{x})\right) \label{eq:controller}
\end{equation}

\subsubsection{Qualitative Properties}
First of all, if the range of the virtual LIDAR sensor used to construct $\mathcal{LF}(\mathbf{y})$ in the model layer is smaller than $R$, the vector field $\mathbf{u}$ is Lipschitz continuous since $\mathbf{v}(\mathbf{y})$ is shown to be Lipschitz continuous in \cite{arslan_kod_WAFR2016} and $\mathbf{y}=\mathbf{h}(\mathbf{x})$ is a smooth change of coordinates. We are led to the following result.
\begin{corollary}
The vector field $\mathbf{u}: \mathcal{F}_{map} \rightarrow T\mathcal{F}_{map}$ generates a unique continuously differentiable partial flow.
\end{corollary}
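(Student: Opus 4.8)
The plan is to invoke the standard existence-and-uniqueness theory for flows of Lipschitz vector fields, so the entire argument reduces to three observations that have essentially all been assembled already in the excerpt. First I would note that $\mathbf{h}:\mathcal{F}_{map}\to\mathcal{F}_{model}$ is a $C^\infty$ diffeomorphism (Proposition \ref{proposition:diffeomorphism}), so $D_\mathbf{x}\mathbf{h}$ is invertible at every $\mathbf{x}\in\mathcal{F}_{map}$ and $[D_\mathbf{x}\mathbf{h}]^{-1}$ depends smoothly on $\mathbf{x}$; in particular it is locally Lipschitz. Second, I would recall from \cite{arslan_kod_WAFR2016} that the model-space field $\mathbf{v}$ is Lipschitz continuous on $\mathcal{F}_{model}$ (this is exactly the hypothesis on the virtual LIDAR range $<R$ that is flagged just before the statement). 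Composing $\mathbf{v}$ with the smooth map $\mathbf{h}$ preserves local Lipschitz continuity, and multiplying the locally Lipschitz, locally bounded factor $[D_\mathbf{x}\mathbf{h}]^{-1}$ against the locally Lipschitz, locally bounded factor $\mathbf{v}\circ\mathbf{h}$ yields a locally Lipschitz vector field $\mathbf{u}$ on $\mathcal{F}_{map}$ — this is the content of the sentence preceding the corollary, which I would simply make precise.

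Given local Lipschitzness, the Picard--Lindel\"of theorem gives, for each initial condition $\mathbf{x}_0\in\mathcal{F}_{map}$, a unique maximal integral curve, and the union of these curves over all initial conditions and times in their maximal intervals of existence is by definition the partial flow of $\mathbf{u}$; uniqueness of solutions is precisely what makes this a well-defined (partial) flow rather than merely a solution set. For the ``continuously differentiable'' claim I would argue two ways, either of which suffices: (i) each integral curve $t\mapsto\mathbf{x}(t)$ is $C^1$ in $t$ because $\dot{\mathbf{x}}(t)=\mathbf{u}(\mathbf{x}(t))$ with $\mathbf{u}$ continuous and $\mathbf{x}(t)$ continuous, so the right-hand side is continuous in $t$; and (ii) joint continuous differentiability of the flow map in $(t,\mathbf{x}_0)$ follows from the smoothness of $\mathbf{u}$ — indeed $\mathbf{u}$ is $C^\infty$ wherever $\mathbf{v}$ is, and more than $C^1$ regularity of the flow is available, but $C^1$ is all that is asserted. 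I would phrase the conclusion to match \cite{arslan_kod_WAFR2016}'s terminology, observing that the conjugacy $\mathbf{x}\mapsto\mathbf{h}(\mathbf{x})$ carries the partial flow of $\mathbf{u}$ to the partial flow of $\mathbf{v}$, so the former inherits its qualitative structure from the latter.

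The only genuine subtlety — and the step I would be most careful about — is the transition from \emph{locally} Lipschitz to the global statement, since $\mathcal{F}_{map}$ is an open manifold and a priori integral curves could escape to the boundary in finite time or fail to be defined globally. This is not actually an obstacle to the corollary as stated, because ``partial flow'' already builds in the possibility of finite escape times; the flow is defined only on an open subset of $\mathbb{R}\times\mathcal{F}_{map}$ (the flow domain), and uniqueness and $C^1$-dependence hold there. So I would resist the temptation to prove completeness here (that $\mathcal{F}_{map}$ is positively invariant under $\mathbf{u}$ and that trajectories converge to $\mathbf{x}_d$ is the subject of the later main theorem, not this corollary) and simply cite the appropriate existence-uniqueness-smoothness theorem for ODEs — e.g. the version in Hirsch--Smale or Lee's smooth manifolds text — applied to the locally Lipschitz (indeed piecewise-smooth) field $\mathbf{u}$ on $\mathcal{F}_{map}$. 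A one-line remark that $\mathbf{u}$ is in fact smooth away from the zero-measure set where $\mathbf{v}$'s projection-based definition loses smoothness, while Lipschitz everywhere, closes the argument.
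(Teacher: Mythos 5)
Your proposal matches the paper's treatment, which supplies no separate proof of this corollary but simply notes in the preceding sentence that $\mathbf{u}$ is Lipschitz (because $\mathbf{v}$ is Lipschitz by \cite{arslan_kod_WAFR2016} and $\mathbf{y}=\mathbf{h}(\mathbf{x})$ is a smooth change of coordinates) and lets standard ODE existence--uniqueness theory do the rest. Your elaborations --- the local Lipschitz accounting for each factor, the observation that ``partial flow'' already sidesteps any completeness question, and the piecewise-smooth structure of the projection-based field $\mathbf{v}$ underpinning the $C^1$ claim --- are sound refinements of exactly the route the paper takes.
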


To ensure completeness (i.e. absence of finite time escape through boundaries in $\mathcal{F}_{map}$) we must verify that the robot never collides with any obstacle in the environment, i.e., leaves its freespace positively invariant.

\begin{proposition} \label{proposition:positive_invariance}
The freespace $\mathcal{F}_{map}$ is positively invariant under the law \eqref{eq:controller}.
\end{proposition}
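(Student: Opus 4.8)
The plan is to exploit the fact that, by construction, the mapped-layer vector field $\mathbf{u}$ in \eqref{eq:controller} is precisely the pullback of the model-layer field $\mathbf{v}$ through the diffeomorphism $\mathbf{h}$, so that the flow generated by $\mathbf{u}$ on $\mathcal{F}_{map}$ is conjugate (via $\mathbf{h}$) to the flow generated by $\mathbf{v}$ on $\mathcal{F}_{model}$; positive invariance of $\mathcal{F}_{map}$ then reduces to the positive invariance of $\mathcal{F}_{model}$ already established in \cite{arslan_kod_WAFR2016}. Throughout, the mapped layer is treated as a fixed snapshot (fixed $M$, $N$, star centers $\mathbf{x}^*_j$, obstacle functions $\beta_j$ and sensed convex fragments), the switching among successive snapshots being outside the scope of this statement.

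The first step is to establish the $\mathbf{h}$-relatedness of the two vector fields. Let $\mathbf{x}(t)$ be the integral curve of $\dot{\mathbf{x}}=\mathbf{u}(\mathbf{x})$ through a point $\mathbf{x}(0)\in\mathcal{F}_{map}$, whose existence and uniqueness on a maximal interval follow from the preceding Corollary. Since $\mathbf{h}$ is a $C^\infty$ diffeomorphism (Proposition \ref{proposition:diffeomorphism}), in particular $D_{\mathbf{x}}\mathbf{h}$ is everywhere invertible, the curve $\mathbf{y}(t):=\mathbf{h}(\mathbf{x}(t))$ is well defined and, by the chain rule together with \eqref{eq:controller},
\[
\dot{\mathbf{y}}(t)=D_{\mathbf{x}(t)}\mathbf{h}\,\dot{\mathbf{x}}(t)=D_{\mathbf{x}(t)}\mathbf{h}\,[D_{\mathbf{x}(t)}\mathbf{h}]^{-1}\,(\mathbf{v}\circ\mathbf{h})(\mathbf{x}(t))=\mathbf{v}(\mathbf{y}(t)),
\]
so $\mathbf{y}(t)$ is an integral curve of the model-layer dynamics $\dot{\mathbf{y}}=\mathbf{v}(\mathbf{y})$ with $\mathbf{y}(0)=\mathbf{h}(\mathbf{x}(0))\in\mathcal{F}_{model}$. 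By the construction of Section \ref{sec:geometric_transformation}, $\mathcal{F}_{model}$ is a convex sphere world — a convex outer boundary punctured by the $M$ disks $\overline{B(\mathbf{x}^*_j,\rho_j)}$ and the $N$ sensed strictly convex fragments — on which the separation guaranteed by Assumption \ref{assumption:epsilon}-(b), together with the curvature condition of \cite[Assumption 2]{arslan_kod_WAFR2016}, makes the results of \cite{arslan_kod_WAFR2016} applicable; hence $\mathbf{y}(t)\in\mathcal{F}_{model}$ for all $t\ge 0$ in the maximal interval of existence. Applying the diffeomorphism $\mathbf{h}^{-1}:\mathcal{F}_{model}\to\mathcal{F}_{map}$ then gives $\mathbf{x}(t)=\mathbf{h}^{-1}(\mathbf{y}(t))\in\mathcal{F}_{map}$ for all such $t$, which is the claimed positive invariance; combined with the completeness argument already sketched, no trajectory escapes through $\partial\mathcal{F}_{map}$.

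The only real subtlety — the ``hard part'' — is the bookkeeping at the boundary: one must make sure that $\mathbf{h}$ is a diffeomorphism up to and including $\partial\mathcal{F}_{map}$, so that interior maps to interior and boundary to boundary and a trajectory cannot touch $\partial\mathcal{F}_{map}$ without its image touching $\partial\mathcal{F}_{model}$. This is exactly what Proposition \ref{proposition:diffeomorphism} provides: the thickened star pieces are carried onto the circles $\partial B(\mathbf{x}^*_j,\rho_j)$, while wherever all the switches $\sigma_j$ vanish the map resolves to the identity, so the outer boundary $\partial\mathcal{W}$ and the convex fragments are mapped identically and their contribution to $\partial\mathcal{F}_{map}$ coincides with that to $\partial\mathcal{F}_{model}$. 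The remaining routine point is simply to verify that the particular sphere world generated here satisfies the hypotheses of the invariance theorem of \cite{arslan_kod_WAFR2016}; once those two items are in place the proof is a standard transfer of a dynamical property across a smooth conjugacy.
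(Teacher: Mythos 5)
Your proof is correct, but it takes a genuinely different route from the paper's. The paper establishes positive invariance by a direct Nagumo-type boundary condition: away from star-shaped obstacles, $\mathbf{h}$ resolves to the identity and the collision-avoidance guarantees of \cite{arslan_kod_WAFR2016} apply verbatim, so it suffices to check $\mathbf{u}(\mathbf{x}_c)^\top \nabla\beta_k(\mathbf{x}_c) \geq 0$ at any $\mathbf{x}_c$ on the boundary $\{\beta_k=0\}$ of a mapped star. The paper does this by explicitly computing $[D_{\mathbf{x}}\mathbf{h}_k(\mathbf{x}_c)]^{-\top}\nabla\beta_k(\mathbf{x}_c)$ in closed form (equation \eqref{eq:differential_gradient}, exploiting $\sigma_k(\mathbf{x}_c)=1$ and the $2\times 2$ matrix inverse), and then observing that the model-layer controller $\mathbf{v}$ produces a vector whose inner product with $\mathbf{h}(\mathbf{x}_c)-\mathbf{x}^*_k$ is non-negative because the projected goal lies in the halfspace cut off by the hyperplane tangent to the model disk at $\mathbf{h}(\mathbf{x}_c)$.

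By contrast, you avoid the pointwise boundary calculation entirely: you observe that $\mathbf{u}$ is $\mathbf{h}$-related to $\mathbf{v}$ by construction, so the flow on $\mathcal{F}_{map}$ is conjugate via the closed-set diffeomorphism $\mathbf{h}$ to the flow on $\mathcal{F}_{model}$, and positive invariance transfers across a conjugacy. This is cleaner and more global, and you correctly identify that the burden falls on Proposition \ref{proposition:diffeomorphism} holding up to and including $\partial\mathcal{F}_{map}$ (which it does, by the boundary-preservation step in its proof). What your argument gives up is the explicit formula \eqref{eq:differential_gradient}, which the paper reuses later in the proof of Theorem \ref{theorem:control_se2} for the differential-drive case, so the paper's seemingly more laborious computation earns its keep there. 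Both approaches are sound; yours reads as the "why it must be true," theirs as the "how to verify it pointwise and harvest a reusable identity."
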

\begin{proof}
Included in Appendix \ref{appendix_sec:section_controller}.
\end{proof}

\begin{lemma} \label{lemma:equilibrium_stability}
\begin{enumerate}
\item The set of stationary points of control law \eqref{eq:controller} is given as
$\{ \mathbf{x}_d\} \bigcup \{\mathbf{h}^{-1}(\mathbf{s}_j)\}_{j \in \{1,\ldots,M\}} \bigcup_{i=1}^N \mathcal{G}_i$, where\footnote{Here $d(A,B)=\inf\{||\mathbf{a}-\mathbf{b}|| \, | \, \mathbf{a} \in A, \mathbf{b} \in B\}$ denotes the distance between two sets $A,B$.}
\begin{subequations} \label{eq:saddles}
\begin{eqnarray}
& \mathbf{s}_j = \mathbf{x}^*_j - \rho_j \dfrac{\mathbf{x}_d-\mathbf{x}^*_j}{|| \mathbf{x}_d-\mathbf{x}^*_j ||} \label{eq:saddles_disks} \\
& \mathcal{G}_i := \left\{ \mathbf{q} \in \mathcal{F}_{map} \, \Big | \, d(\mathbf{q},O_i)=r, \dfrac{(\mathbf{q}-\mathrm{\Pi}_{\overline{O}_i}(\mathbf{q}))^\top(\mathbf{q}-\mathbf{x}_d)}{||\mathbf{q}-\mathrm{\Pi}_{\overline{O}_i}(\mathbf{q})|| \, ||\mathbf{q}-\mathbf{x}_d||} = 1 \right\} \label{eq:saddles_convex}
\end{eqnarray}
\end{subequations}
with $j$ spanning the $M$ star-shaped obstacles in $\mathcal{F}_{map}$ and $i$ spanning the $N$ convex obstacles in $\mathcal{F}_{map}$.
\item The goal $\mathbf{x}_d$ is the only locally stable equilibrium of control law \eqref{eq:controller} and all the other stationary points $\{\mathbf{h}^{-1}(\mathbf{s}_j)\}_{j \in \{1,\ldots,M\}} \bigcup_{i=1}^N \mathcal{G}_i$, each associated with an obstacle, are nondegenerate saddles.
\end{enumerate}
\end{lemma}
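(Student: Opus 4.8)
The plan is to reduce both assertions to the equilibrium analysis of the model-layer field $\mathbf{v}$ carried out in \cite{arslan_kod_WAFR2016}, transported through the diffeomorphism $\mathbf{h}$. For the first part, note that Proposition \ref{proposition:diffeomorphism} makes $D_\mathbf{x}\mathbf{h}$ nonsingular at every $\mathbf{x}\in\mathcal{F}_{map}$, so \eqref{eq:controller} gives $\mathbf{u}(\mathbf{x})=\mathbf{0}$ if and only if $\mathbf{v}(\mathbf{h}(\mathbf{x}))=\mathbf{0}$; since $\mathbf{h}$ is a bijection, the zero set of $\mathbf{u}$ is exactly $\mathbf{h}^{-1}$ of the zero set of $\mathbf{v}$, with no equilibria gained or lost. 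It then suffices to recall the zero set of $\mathbf{v}$ on $\mathcal{F}_{model}$: by \cite{arslan_kod_WAFR2016} it consists of the goal $\mathbf{x}_d$, one point $\mathbf{s}_j$ lying ``behind'' each of the $M$ model spheres --- on the ray emanating from $\mathbf{x}_d$ through the center $\mathbf{x}^*_j$, at distance $\rho_j$ from $\mathbf{x}^*_j$, namely the point defined in \eqref{eq:saddles_disks} --- and, for each of the $N$ untransformed convex fragments, the set $\mathcal{G}_i$ of \eqref{eq:saddles_convex} (the convex boundary $\partial\mathcal{W}$ contributes none). Since Assumption \ref{assumption:epsilon}-(b) forces all switches $\sigma_j$ to vanish, hence $\mathbf{h}$ to be the identity, on a neighborhood of $\mathbf{x}_d$ and on a neighborhood of each convex obstacle, we get $\mathbf{h}^{-1}(\mathbf{x}_d)=\mathbf{x}_d$ and $\mathbf{h}^{-1}(\mathcal{G}_i)=\mathcal{G}_i$, which is the claimed list.

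For the second part I would linearize \eqref{eq:controller} at an arbitrary equilibrium $\mathbf{x}_0=\mathbf{h}^{-1}(\mathbf{y}_0)$. Differentiating the product in \eqref{eq:controller} and using $\mathbf{v}(\mathbf{y}_0)=\mathbf{0}$ to kill the term in which $\mathbf{v}\circ\mathbf{h}$ appears undifferentiated, one obtains $D_{\mathbf{x}_0}\mathbf{u}=[D_{\mathbf{x}_0}\mathbf{h}]^{-1}\,(D_{\mathbf{y}_0}\mathbf{v})\,D_{\mathbf{x}_0}\mathbf{h}$, so $D_{\mathbf{x}_0}\mathbf{u}$ is \emph{similar} to $D_{\mathbf{y}_0}\mathbf{v}$ and thus shares its spectrum, its invertibility, and the number of eigenvalues on each side of the imaginary axis. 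Hence: at $\mathbf{x}_d$, where $\mathrm{\Pi}_{\mathcal{LF}(\mathbf{y})}(\mathbf{x}_d)=\mathbf{x}_d$ for $\mathbf{y}$ near $\mathbf{x}_d$ so that $\mathbf{v}(\mathbf{y})=-k(\mathbf{y}-\mathbf{x}_d)$ locally and $D_{\mathbf{x}_d}\mathbf{v}=-k\mathbf{I}$, the point $\mathbf{x}_d$ is a locally exponentially stable equilibrium of \eqref{eq:controller}; and at each $\mathbf{s}_j$ and each point of $\mathcal{G}_i$, where \cite{arslan_kod_WAFR2016} establishes that the model-layer linearization is nonsingular with exactly one unstable direction, the pulled-back equilibrium is a nondegenerate saddle. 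Since no non-goal equilibrium is stable, $\mathbf{x}_d$ is the unique locally stable equilibrium.

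The only substantive step, I expect, is the careful invocation of the equilibrium characterization of \cite{arslan_kod_WAFR2016} for our particular model layer --- which is a genuine sphere world only around the images of the familiar stars, while carrying the untransformed convex fragments elsewhere --- together with the accompanying claim that $\mathbf{h}$ reduces to the identity on neighborhoods of those fragments and of $\mathbf{x}_d$, so that the corresponding equilibria and their stable/saddle character pass through unchanged. By contrast, the similarity computation and the resulting preservation of hyperbolicity and of inertia are routine once $\mathbf{v}(\mathbf{y}_0)=\mathbf{0}$ has been used to drop the derivative of $[D_\mathbf{x}\mathbf{h}]^{-1}$.
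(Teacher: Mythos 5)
Your argument is correct, and for part 1 it mirrors the paper exactly: since $D_\mathbf{x}\mathbf{h}$ is everywhere nonsingular and $\mathbf{h}$ is a bijection, the zero set of $\mathbf{u}$ is $\mathbf{h}^{-1}$ of the zero set of $\mathbf{v}$, which is read off from \cite{arslan_kod_WAFR2016}; the identification of $\mathbf{h}^{-1}(\mathbf{x}_d)=\mathbf{x}_d$ and $\mathbf{h}^{-1}(\mathcal{G}_i)=\mathcal{G}_i$ via the vanishing of the switches near the goal and the convex fragments is the right observation.

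For part 2 you take a genuinely different route. The paper transfers the qualitative type of each equilibrium by citing the topological fact that the index of an isolated zero of a vector field is invariant under diffeomorphisms of the domain \cite{hirsch_1976}, combined with the stability and nondegeneracy of the corresponding equilibria of $\mathbf{v}$ established in \cite[Propositions 5, 11]{arslan_kod_WAFR2016}. You instead linearize directly: since $\mathbf{v}(\mathbf{y}_0)=\mathbf{0}$ kills the term containing the derivative of $[D_\mathbf{x}\mathbf{h}]^{-1}$, you obtain the similarity $D_{\mathbf{x}_0}\mathbf{u}=[D_{\mathbf{x}_0}\mathbf{h}]^{-1}(D_{\mathbf{y}_0}\mathbf{v})D_{\mathbf{x}_0}\mathbf{h}$, so the full spectrum of the linearization is preserved. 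Your approach is arguably the sharper of the two for this purpose: similarity pins down nondegeneracy and the exact count of stable/unstable eigenvalues in one step, whereas an index argument alone (which cannot distinguish a sink from a source, both of index $+1$ in the plane) must be read as a shorthand for the fuller observation that conjugation by a diffeomorphism carries hyperbolic equilibria to hyperbolic equilibria of the same inertia. The paper's route has the advantage of being coordinate-free and not requiring one to justify differentiability of $\mathbf{v}$ at the equilibria; your route makes the preservation of hyperbolicity and the saddle/sink dichotomy immediate. Both ultimately lean on the same facts about $\mathbf{v}$ from \cite{arslan_kod_WAFR2016}, and both are sound.
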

\begin{proof}
Included in Appendix \ref{appendix_sec:section_controller}.
\end{proof}

\begin{proposition} \label{proposition:attraction}
The goal location $\mathbf{x}_d$ is an asymptotically stable equilibrium of \eqref{eq:controller}, whose region of attraction includes the freespace $\mathcal{F}_{map}$ excepting a set of measure zero.
\end{proposition}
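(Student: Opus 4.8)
The plan is to transfer the whole analysis to the model layer through the diffeomorphism $\mathbf{h}$ and there invoke (a Lyapunov/LaSalle version of) the convergence guarantees of \cite{arslan_kod_WAFR2016}. First I would note that, by the very definition \eqref{eq:controller}, $\mathbf{x}(t)$ is an integral curve of $\mathbf{u}$ on $\mathcal{F}_{map}$ if and only if $\mathbf{y}(t):=\mathbf{h}(\mathbf{x}(t))$ is an integral curve of $\dot{\mathbf{y}}=\mathbf{v}(\mathbf{y})$ on $\mathcal{F}_{model}$; since $\mathbf{h}$ is a $C^\infty$ diffeomorphism (Proposition \ref{proposition:diffeomorphism}) with $\mathbf{h}(\mathbf{x}_d)=\mathbf{x}_d$, the two flows are smoothly conjugate. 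A smooth conjugacy preserves (asymptotic) stability of equilibria, carries regions of attraction onto regions of attraction, and maps sets of Lebesgue measure zero to sets of measure zero. Hence it suffices to prove that $\mathbf{x}_d$ is asymptotically stable for $\dot{\mathbf{y}}=\mathbf{v}(\mathbf{y})$ with region of attraction equal to $\mathcal{F}_{model}$ minus a measure-zero set, and then pull the conclusion back through $\mathbf{h}^{-1}$.

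On the model layer the environment is, by construction, a collection of Euclidean disks together with the sensed strictly convex fragments --- precisely the setting for which the doubly reactive scheme of \cite{arslan_kod_WAFR2016} is designed. I would first establish global convergence to the stationary set: Proposition \ref{proposition:positive_invariance} together with compactness of $\mathcal{W}$ guarantees that every trajectory is complete and bounded, and, as in \cite{arslan_kod_WAFR2016}, $V(\mathbf{y})=\tfrac{1}{2}||\mathbf{y}-\mathbf{x}_d||^2$ is a Lyapunov function, because the metric-projection inequality for the convex set $\mathcal{LF}(\mathbf{y})$ (which contains $\mathbf{y}$) gives $\dot V=-k\,(\mathbf{y}-\mathbf{x}_d)^\top\big(\mathbf{y}-\mathrm{\Pi}_{\mathcal{LF}(\mathbf{y})}(\mathbf{x}_d)\big)\le -k\,||\mathbf{y}-\mathrm{\Pi}_{\mathcal{LF}(\mathbf{y})}(\mathbf{x}_d)||^2\le 0$, with equality exactly at the stationary points. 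By LaSalle's invariance principle the $\omega$-limit set of every trajectory lies in the stationary set of Lemma \ref{lemma:equilibrium_stability}(1); since that set is discrete and $\omega$-limit sets are connected, every trajectory converges to a single equilibrium.

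It then remains to discard convergence to anything but $\mathbf{x}_d$ from all but a negligible set of initial conditions. By Lemma \ref{lemma:equilibrium_stability}(2), $\mathbf{x}_d$ is the unique locally stable equilibrium and every other stationary point is a nondegenerate saddle; combining this with the local attractivity supplied by $V$ yields asymptotic stability of $\mathbf{x}_d$. For each of the finitely many saddles the stable manifold theorem furnishes a local stable manifold of dimension at most one, hence of measure zero, and its global stable set --- the union of backward trajectories through it --- is a countable union of $C^1$-images of such manifolds and therefore still measure zero. Since the set of points that fail to converge to $\mathbf{x}_d$ is precisely the finite union of these stable sets, its complement in $\mathcal{F}_{model}$ is exactly the region of attraction of $\mathbf{x}_d$ and has full measure; transporting this through $\mathbf{h}^{-1}$ proves the proposition.

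The hard part will be the limited regularity of $\mathbf{v}$: because $\mathcal{LF}(\mathbf{y})$ and the associated projection depend on $\mathbf{y}$ only in a Lipschitz (not $C^1$) fashion, the stable manifold theorem cannot be applied verbatim on all of $\mathcal{F}_{model}$. I expect to handle this by localizing near each saddle, where the single ``responsible'' obstacle is the only active constraint and the vector field inherits enough smoothness from the corresponding (analytic) obstacle data, exactly as in \cite{arslan_kod_WAFR2016}. A secondary point requiring care is that $\mathbf{h}$ and the model environment change as new star-shaped obstacles are discovered and convex fragments are refreshed, so the statement must be read for the piecewise-constant sequence of environments emitted by the semantic mapping process, with the argument above applied to each stabilized snapshot.
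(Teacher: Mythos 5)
Your proof is correct and follows essentially the same route as the paper's: the paper computes $\dot V$ for $V(\mathbf{x})=\|\mathbf{h}(\mathbf{x})-\mathbf{x}_d\|^2$ directly in $\mathcal{F}_{map}$, invokes LaSalle, and then discards the nondegenerate saddles via Lemma~\ref{lemma:equilibrium_stability} — which is exactly the pullback of your model-layer Lyapunov/LaSalle argument through the conjugacy $\mathbf{h}$, so the two calculations coincide line for line. The extra care you take about the Lipschitz regularity of $\mathbf{v}$ near the saddles and about the time-varying map as obstacles are discovered is not made explicit in the paper's proof (it is delegated to \cite{arslan_kod_WAFR2016} and to Lemma~\ref{lemma:smooth_map}, respectively), but it is resolved in the same way you indicate.
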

\begin{proof}
Included in Appendix \ref{appendix_sec:section_controller}.
\end{proof}

We can now immediately conclude the following central summary statement.
\begin{theorem}
The reactive controller in \eqref{eq:controller} leaves the freespace $\mathcal{F}_{map}$ positively invariant, and its unique continuously differentiable flow, starting at almost any robot placement $\mathbf{x} \in \mathcal{F}_{map}$, asymptotically reaches the goal location $\mathbf{x}_d$, while strictly decreasing $|| \mathbf{h}(\mathbf{x})-\mathbf{x}_d||$ along the way.
\end{theorem}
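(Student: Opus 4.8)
The plan is to assemble the theorem directly from the four results that immediately precede it, since each clause of the statement corresponds to one of them. First I would invoke Proposition~\ref{proposition:positive_invariance} to get positive invariance of $\mathcal{F}_{map}$ under the law \eqref{eq:controller}. Next, I would recall the Corollary guaranteeing that $\mathbf{u}$ generates a unique continuously differentiable partial flow; combining this with positive invariance and the fact that $\mathcal{F}_{map}$ is (relatively) closed in a bounded workspace, I would argue that no trajectory can escape $\mathcal{F}_{map}$ in finite time, so the partial flow is in fact complete on $\mathcal{F}_{map}$ --- here one uses that the only way to lose completeness is finite-time escape through the boundary, which positive invariance rules out, and that the velocity field is bounded on the compact workspace.

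The convergence clause follows from Proposition~\ref{proposition:attraction}: the goal $\mathbf{x}_d$ is asymptotically stable with region of attraction equal to $\mathcal{F}_{map}$ minus a measure-zero set (the stable manifolds of the nondegenerate saddles identified in Lemma~\ref{lemma:equilibrium_stability}). So starting from almost every $\mathbf{x} \in \mathcal{F}_{map}$, the flow asymptotically reaches $\mathbf{x}_d$. For the final assertion --- that $\|\mathbf{h}(\mathbf{x}(t)) - \mathbf{x}_d\|$ is strictly decreasing along the way --- I would push the dynamics forward through the diffeomorphism $\mathbf{h}$ (Proposition~\ref{proposition:diffeomorphism}): setting $\mathbf{y} = \mathbf{h}(\mathbf{x})$, the definition \eqref{eq:controller} gives $\dot{\mathbf{y}} = D_\mathbf{x}\mathbf{h} \cdot \mathbf{u}(\mathbf{x}) = \mathbf{v}(\mathbf{y})$, i.e. the image trajectory obeys exactly the doubly reactive model-space dynamics of \cite{arslan_kod_WAFR2016}. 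One then differentiates $V(\mathbf{y}) = \tfrac12\|\mathbf{y}-\mathbf{x}_d\|^2$ and uses the known strict-decrease property of $\mathbf{v}$ along model-space trajectories away from equilibria (which is exactly the Lyapunov argument in \cite{arslan_kod_WAFR2016}, relying on $\mathbf{x}_d \in \mathcal{LF}(\mathbf{y})$ so that the projection term pulls $\mathbf{y}$ toward $\mathbf{x}_d$), noting that $\mathbf{h}(\mathbf{x}_d) = \mathbf{x}_d$ so the model-space goal coincides with the image of the physical goal.

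The only real subtlety, and the step I expect to need the most care, is completeness: one must confirm that trajectories do not run off to the boundary $\partial\mathcal{F}_{map}$ in finite time and that the "almost every initial condition" trajectories, which avoid the saddles' stable manifolds, genuinely converge rather than merely remaining bounded. The former is handled by positive invariance plus boundedness of $\mathbf{u}$ on the compact workspace; the latter is the content of Proposition~\ref{proposition:attraction} together with Lemma~\ref{lemma:equilibrium_stability}, which certifies that the chain-recurrent set consists only of $\mathbf{x}_d$ and a finite collection of nondegenerate (hence measure-zero-stable-manifold) saddles, so that a standard argument --- bounded trajectory, LaSalle-type invariance via the decreasing function $\|\mathbf{h}(\mathbf{x})-\mathbf{x}_d\|$, and exclusion of saddle limit points off their stable manifolds --- forces convergence to $\mathbf{x}_d$. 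Everything else is a direct citation of the preceding propositions, so the theorem is essentially their conjunction.
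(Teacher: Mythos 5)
Your proposal is correct and follows essentially the same route as the paper, which presents this theorem as an immediate conjunction of Proposition~\ref{proposition:positive_invariance} (invariance), the Corollary on uniqueness/regularity of the flow, Lemma~\ref{lemma:equilibrium_stability}, and Proposition~\ref{proposition:attraction} (almost-global convergence via the Lyapunov function $\|\mathbf{h}(\mathbf{x})-\mathbf{x}_d\|^2$). The paper offers no separate proof beyond ``we can now immediately conclude,'' and your assembly of the pieces --- including the pushforward $\dot{\mathbf{y}}=\mathbf{v}(\mathbf{y})$ to justify strict decrease away from stationary points and the completeness remark --- is exactly what that phrase compresses.
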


\subsection{Reactive Controller for Differential Drive Robots}
In this Section, we extend our reactive controller to the case of a differential drive robot, whose state is $\overline{\mathbf{x}}:=(\mathbf{x},\psi) \in \mathcal{F}_{map} \times S^1 \subset SE(2)$, and its dynamics are given by\footnote{We use the ordered set notation $(*,*,\ldots)$ and the matrix notation $\begin{bmatrix}* & * & \ldots \end{bmatrix}^\top$ for vectors interchangeably.}
\begin{equation}
\dot{\overline{\mathbf{x}}} = \mathbf{B}(\psi) \overline{\mathbf{u}} \label{eq:unicycle_dynamics}
\end{equation}
with $\mathbf{B}(\psi) = \begin{bmatrix}
\cos\psi & \sin\psi & 0 \\ 0 & 0 & 1
\end{bmatrix}^\top$ and $\overline{\mathbf{u}} = (v,\omega)$ with $v \in \mathbb{R}$ and $\omega \in \mathbb{R}$ the linear and angular input respectively. We will follow a similar procedure to the fully actuated case; we begin by describing a smooth diffeomorphism $\overline{\mathbf{h}}:\mathcal{F}_{map} \times S^1 \rightarrow \mathcal{F}_{model} \times S^1$ and then we establish the results about the controller.

\subsubsection{Construction and Properties of the $SE(2)$ Diffeomorphism}
We construct our map $\overline{\mathbf{h}}$ from $\mathcal{F}_{map} \times S^1$ to $\mathcal{F}_{model} \times S^1$ as
\begin{equation}
\overline{\mathbf{y}}=(\mathbf{y},\varphi) = \overline{\mathbf{h}}(\overline{\mathbf{x}}):=\left( \mathbf{h}(\mathbf{x}), \xi(\overline{\mathbf{x}})\right) \label{eq:map_se2}
\end{equation}
with $\overline{\mathbf{x}}=(\mathbf{x},\psi) \in \mathcal{F}_{map} \times S^1$, $\overline{\mathbf{y}}:=(\mathbf{y},\varphi) \in \mathcal{F}_{model} \times S^1$ and
\begin{equation}
\varphi = \xi(\overline{\mathbf{x}}) := \angle \left(\mathbf{e}(\overline{\mathbf{x}}) \right)\label{eq:phi_definition}
\end{equation}
Here, $\angle\mathbf{e}:=\text{atan2}(e_2,e_1)$ and
\begin{equation}
\mathbf{e}(\overline{\mathbf{x}}) = \mathrm{\Pi}_{\mathbf{y}} \cdot D_{\overline{\mathbf{x}}}\overline{\mathbf{h}} \cdot \mathbf{B}(\psi) \cdot \begin{bmatrix}
1 \\ 0
\end{bmatrix} = D_\mathbf{x}\mathbf{h} \begin{bmatrix}
\cos\psi \\ \sin\psi
\end{bmatrix} \label{eq:e}
\end{equation}
with $\mathrm{\Pi}_\mathbf{y}$ denoting the projection onto the first two components. The reason for choosing $\varphi$ as in \eqref{eq:phi_definition} will become evident in the next paragraph, in our effort to control the equivalent differential drive robot dynamics in $\mathcal{F}_{model}$.

\begin{proposition} \label{proposition:diffeo_se2}
The map $\overline{\mathbf{h}}$ in \eqref{eq:map_se2} is a $C^\infty$ diffeomorphism from $\mathcal{F}_{map} \times S^1$ to $\mathcal{F}_{model} \times S^1$.
\end{proposition}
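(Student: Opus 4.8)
The plan is to exploit the fact that $\overline{\mathbf{h}}$ fibers over the planar diffeomorphism $\mathbf{h}$ of Proposition~\ref{proposition:diffeomorphism}: together with the two Cartesian projections $\mathcal{F}_{map}\times S^1\to\mathcal{F}_{map}$ and $\mathcal{F}_{model}\times S^1\to\mathcal{F}_{model}$, the map $\overline{\mathbf{h}}=(\mathbf{h},\xi)$ covers $\mathbf{h}$ on the base, so it suffices to show that $\overline{\mathbf{h}}$ is (i) smooth, (ii) a bijection, and (iii) has everywhere nonsingular differential; a smooth bijection with nonsingular differential is automatically a $C^\infty$ diffeomorphism (apply the inverse function theorem at each point to get smoothness of the inverse). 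Throughout, Proposition~\ref{proposition:diffeomorphism} supplies that $\mathbf{h}\in C^\infty$ and that $D_\mathbf{x}\mathbf{h}$ is invertible for every $\mathbf{x}\in\mathcal{F}_{map}$.

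For smoothness, the field $\mathbf{x}\mapsto D_\mathbf{x}\mathbf{h}$ is smooth, hence so is $\mathbf{e}$ in \eqref{eq:e}; since $D_\mathbf{x}\mathbf{h}$ is nonsingular and $(\cos\psi,\sin\psi)^\top$ is a unit vector, $\mathbf{e}(\overline{\mathbf{x}})\neq\mathbf{0}$ everywhere, and composing with the smooth angle map $\angle:\mathbb{R}^2\setminus\{\mathbf{0}\}\to S^1$ shows $\xi$, and hence $\overline{\mathbf{h}}$, is smooth. The key step is the behaviour of $\overline{\mathbf{h}}$ along a fiber: fix $\mathbf{x}$ and set $g_\mathbf{x}(\psi):=\xi(\mathbf{x},\psi)=\angle\!\big(D_\mathbf{x}\mathbf{h}\,(\cos\psi,\sin\psi)^\top\big)$, a smooth map $S^1\to S^1$. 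Differentiating (the computation appears in Appendix~\ref{appendix:calculation_jacobian}) and using the identity $(Aa)\times(Ab)=(\det A)(a\times b)$ for the scalar planar cross product, one gets $g_\mathbf{x}'(\psi)=\det(D_\mathbf{x}\mathbf{h})/||\mathbf{e}(\mathbf{x},\psi)||^2$, which never vanishes and keeps the sign of $\det D_\mathbf{x}\mathbf{h}$. Thus $g_\mathbf{x}$ is a local diffeomorphism, hence a covering of $S^1$; and since $\psi\mapsto D_\mathbf{x}\mathbf{h}\,(\cos\psi,\sin\psi)^\top$ traverses exactly once the origin-encircling ellipse $D_\mathbf{x}\mathbf{h}(S^1)$, the degree of $g_\mathbf{x}$ equals $\operatorname{sign}(\det D_\mathbf{x}\mathbf{h})=\pm1$, so $g_\mathbf{x}$ is in fact a $C^\infty$ diffeomorphism of $S^1$.

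Bijectivity of $\overline{\mathbf{h}}$ now follows: given $(\mathbf{y},\varphi)$, there is a unique $\mathbf{x}=\mathbf{h}^{-1}(\mathbf{y})$ and then a unique $\psi=g_\mathbf{x}^{-1}(\varphi)$, with inverse $\overline{\mathbf{h}}^{-1}(\mathbf{y},\varphi)=\big(\mathbf{h}^{-1}(\mathbf{y}),\,g^{-1}_{\mathbf{h}^{-1}(\mathbf{y})}(\varphi)\big)$. For the differential, since $\mathbf{h}$ does not depend on $\psi$ the Jacobian is block lower triangular,
\begin{equation}
D_{\overline{\mathbf{x}}}\overline{\mathbf{h}}=\begin{bmatrix} D_\mathbf{x}\mathbf{h} & \mathbf{0} \\ D_\mathbf{x}\xi & \partial_\psi\xi \end{bmatrix},\qquad \det D_{\overline{\mathbf{x}}}\overline{\mathbf{h}}=\det(D_\mathbf{x}\mathbf{h})\,\partial_\psi\xi=\frac{\big(\det D_\mathbf{x}\mathbf{h}\big)^2}{||\mathbf{e}(\overline{\mathbf{x}})||^2}>0 ,
\end{equation}
so $D_{\overline{\mathbf{x}}}\overline{\mathbf{h}}$ is nonsingular at every point. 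Combining (i)--(iii) yields that $\overline{\mathbf{h}}$ is a $C^\infty$ diffeomorphism.

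The only nonroutine point is item (ii) restricted to a fiber, i.e.\ that $g_\mathbf{x}$ is a bijection of $S^1$: the nonvanishing of $g_\mathbf{x}'$ drops out of the cross-product identity, but upgrading this to global injectivity requires the degree count, namely the observation that $D_\mathbf{x}\mathbf{h}$ sends the unit circle to an ellipse winding once around the origin. Once that is in hand, the remainder is just bookkeeping with the block-triangular Jacobian and the standard fact that a smooth bijection with nowhere-singular differential is a diffeomorphism.
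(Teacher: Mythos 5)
Your argument is correct but follows a genuinely different route from the paper's. Both proofs hinge on the same key computation, $\partial_\psi\xi = \det(D_\mathbf{x}\mathbf{h})/||\mathbf{e}(\overline{\mathbf{x}})||^2$, which together with the block-triangular Jacobian establishes that $D_{\overline{\mathbf{x}}}\overline{\mathbf{h}}$ is everywhere nonsingular. Where you diverge is in how you upgrade the local diffeomorphism to a global one: the paper reuses the Massey-style criterion already invoked in Proposition~\ref{proposition:diffeomorphism} (nonsingular differential, plus the observation that $\partial(\mathcal{F}_{map}\times S^1)=\partial\mathcal{F}_{map}\times S^1$ preserves boundaries and that boundary components are pairwise homeomorphic), whereas you exploit the fiber structure of $\overline{\mathbf{h}}=(\mathbf{h},\xi)$ over the planar diffeomorphism $\mathbf{h}$: you isolate the fiber map $g_\mathbf{x}(\psi)=\angle(D_\mathbf{x}\mathbf{h}\,(\cos\psi,\sin\psi)^\top)$, show it is a local diffeomorphism of $S^1$ with degree $\operatorname{sign}(\det D_\mathbf{x}\mathbf{h})=\pm1$ (since a linear isomorphism of $\mathbb{R}^2$ winds the unit circle once around the origin), and conclude it is a circle diffeomorphism, which gives global bijectivity directly. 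Your route is more self-contained in that it avoids citing the Massey boundary criterion, at the cost of the extra degree-theoretic observation on the $S^1$ fiber; the paper's route economizes by leaning on the machinery already set up for the planar case. Your final appeal to ``smooth bijection with nowhere-singular differential is a diffeomorphism'' is standard and applies cleanly here, so the overall argument holds.
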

\begin{proof}
Included in Appendix \ref{appendix_sec:section_controller}.
\end{proof}

\subsubsection{Construction of the Reactive Controller}
Using \eqref{eq:map_se2}, we can find the pushforward of the differential drive robot dynamics in \eqref{eq:unicycle_dynamics} as
\begin{equation}
\dot{\overline{\mathbf{y}}}=\begin{bmatrix}
\dot{\mathbf{y}} \\ \dot{\varphi}
\end{bmatrix} = \frac{d}{dt} \begin{bmatrix}
\mathbf{h}(\mathbf{x}) \\ \xi(\overline{\mathbf{x}})
\end{bmatrix} = \left[D_{\overline{\mathbf{x}}} \overline{\mathbf{h}} \circ \overline{\mathbf{h}}^{-1}(\overline{\mathbf{y}}) \right] \cdot \left( \mathbf{B} \circ \overline{\mathbf{h}}^{-1}(\overline{\mathbf{y}}) \right) \cdot \overline{\mathbf{u}} \label{eq:vector_field_se2}
\end{equation}
Based on the above, we can then write 
\begin{equation}
\dot{\overline{\mathbf{y}}}=\begin{bmatrix}
\dot{\mathbf{y}} \\ \dot{\varphi}
\end{bmatrix} = \frac{d}{dt} \begin{bmatrix}
\mathbf{h}(\mathbf{x}) \\ \xi(\overline{\mathbf{x}})
\end{bmatrix} = \mathbf{B}(\varphi) \overline{\mathbf{v}} \label{eq:unicycle_dynamics_se2}
\end{equation}
with $\overline{\mathbf{v}} = (\hat{v},\hat{\omega})$, and the inputs $(\hat{v},\hat{\omega})$ related to $(v,\omega)$ through
\begin{align}
& \hat{v} = ||\mathbf{e}(\overline{\mathbf{x}})|| \, v \label{eq:reference_input_1}\\
& \hat{\omega} = v D_\mathbf{x}\xi \begin{bmatrix}
\cos\psi \\ \sin\psi
\end{bmatrix} + \dfrac{\partial \xi}{\partial \psi} \omega \label{eq:reference_input_2}
\end{align}
with $D_\mathbf{x}\xi = \begin{bmatrix}
\frac{\partial \xi}{\partial x} & \frac{\partial \xi}{\partial y}
\end{bmatrix}$. The calculation of $D_\mathbf{x}\xi$ can be tedious, since it involves derivatives of elements of $D_\mathbf{x}\mathbf{h}$, and is included in Appendix \ref{appendix:calculation_jacobian}.

Hence, we have found equivalent differential drive robot dynamics, defined on $\mathcal{F}_{model} \times S^1$. The idea now is to use the control strategy in \cite{arslan_kod_WAFR2016} for the dynamical system in \eqref{eq:unicycle_dynamics_se2} to find {\em reference inputs} $\hat{v},\hat{\omega}$, and then use \eqref{eq:reference_input_1}, \eqref{eq:reference_input_2} to find the {\em actual inputs} $v,\omega$ that achieve those reference inputs as
\begin{subequations} \label{eq:control_unicycle}
\begin{eqnarray}
& v =\dfrac{\hat{v}}{||\mathbf{e}(\overline{\mathbf{x}})||} \\
& \omega = \left(\dfrac{\partial \xi}{\partial \psi}\right)^{-1} \left(\hat{\omega}-v D_\mathbf{x}\xi \begin{bmatrix}
\cos\psi \\ \sin\psi
\end{bmatrix} \right)
\end{eqnarray}
\end{subequations}
Namely, our reference inputs $\hat{v}$ and $\hat{\omega}$ inspired by \cite{arslan_kod_WAFR2016,astolfi_1999} are given as\footnote{In \eqref{eq:map_se2}, we construct a diffeomorphism $\overline{\mathbf{h}}$ between $\mathcal{F}_{map} \times S^1$ and $\mathcal{F}_{model} \times S^1$. However, for practical purposes, we deal only with one specific chart of $S^1$ in our control structure, described by the angles $(-\pi,\pi]$. As shown in \cite{astolfi_1999}, the discontinuity at $\pm \pi$ does not induce a discontinuity in our controller due to the use of the $\text{atan}$ function in \eqref{eq:reference_control_omega}. On the contrary, with the use of \eqref{eq:reference_control_omega} as in \cite{astolfi_1999,arslan_kod_WAFR2016}, the robot never changes heading in $\mathcal{F}_{model}$, which implies that the generated trajectories both in $\mathcal{F}_{model}$ and (by the properties of the diffeomorphism $\overline{\mathbf{h}}$) in $\mathcal{F}_{map}$ have no cusps, even though the robot might change heading in $\mathcal{F}_{map}$ because of the more complicated nature of the function $\xi$ in \eqref{eq:phi_definition}.}
\begin{subequations} \label{eq:control_unicycle_reference}
\begin{eqnarray}
& \hat{v} = -k \begin{bmatrix}
\cos\varphi \\ \sin\varphi
\end{bmatrix}^\top\left(\mathbf{y}-\mathrm{\Pi}_{\mathcal{LF}(\mathbf{y}) \cap H_\parallel}(\mathbf{x}_d) \right) \label{eq:reference_control_v} \\
& \hat{\omega} = k \, \text{atan} \left( \dfrac{\begin{bmatrix}
-\sin\varphi \\ \cos\varphi
\end{bmatrix}^\top \left( \mathbf{y} - \dfrac{\mathrm{\Pi}_{\mathcal{LF}(\mathbf{y}) \cap H_G}(\mathbf{x}_d)+\mathrm{\Pi}_{\mathcal{LF}(\mathbf{y})}(\mathbf{x}_d)}{2} \right)}{\begin{bmatrix}
\cos\varphi \\ \sin\varphi
\end{bmatrix}^\top \left( \mathbf{y} - \dfrac{\mathrm{\Pi}_{\mathcal{LF}(\mathbf{y}) \cap H_G}(\mathbf{x}_d)+\mathrm{\Pi}_{\mathcal{LF}(\mathbf{y})}(\mathbf{x}_d)}{2} \right)} \right) \label{eq:reference_control_omega}
\end{eqnarray}
\end{subequations}
with $k>0$ a fixed gain, $\mathcal{LF}(\mathbf{y}) \subset \mathcal{F}_{model}$ the convex polygon defining the local freespace at $\mathbf{y} = \mathbf{h}(\mathbf{x})$, and $H_\parallel$ and $H_G$ the lines defined in \cite{arslan_kod_WAFR2016} as 
\begin{align}
H_\parallel = & \left\{ \mathbf{z} \in \mathcal{F}_{model} \, \Big | \, \begin{bmatrix} -\sin\varphi \\ \cos\varphi \end{bmatrix}^\top (\mathbf{z}-\mathbf{y}) = 0 \right\} \\
H_G = & \left\{ \alpha \mathbf{y} + (1-\alpha) \mathbf{x}_d \in \mathcal{F}_{model} \, | \, \alpha \in \mathbb{R} \right\}
\end{align}

\subsubsection{Qualitative Properties}
The properties of the differential drive robot control law given in \eqref{eq:control_unicycle} can be summarized in the following theorem.

\begin{theorem} \label{theorem:control_se2}
The reactive controller for differential drive robots, given in \eqref{eq:control_unicycle}, leaves the freespace $\mathcal{F}_{map} \times S^1$ positively invariant, and its unique continuously differentiable flow, starting at almost any robot configuration $(\mathbf{x}, \psi) \in \mathcal{F}_{map} \times S^1$, asymptotically steers the robot to the goal location $\mathbf{x}^*$, without increasing $|| \mathbf{h}(\mathbf{x})-\mathbf{x}_d||$ along the way.
\end{theorem}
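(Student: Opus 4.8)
The plan is to transport every assertion through the $SE(2)$ diffeomorphism $\overline{\mathbf{h}}$ of Proposition \ref{proposition:diffeo_se2}, exactly as Propositions \ref{proposition:positive_invariance} and \ref{proposition:attraction} and Lemma \ref{lemma:equilibrium_stability} were obtained by transporting the model-layer results through $\mathbf{h}$. The crucial structural fact, already recorded in \eqref{eq:vector_field_se2}--\eqref{eq:reference_input_2}, is that the choice $\varphi=\xi(\overline{\mathbf{x}})=\angle\mathbf{e}(\overline{\mathbf{x}})$ makes $\overline{\mathbf{h}}$ conjugate the closed-loop differential drive dynamics on $\mathcal{F}_{map}\times S^1$ to the closed-loop dynamics $\dot{\overline{\mathbf{y}}}=\mathbf{B}(\varphi)\overline{\mathbf{v}}$ on $\mathcal{F}_{model}\times S^1$ driven by the reference inputs \eqref{eq:control_unicycle_reference}; but that system is precisely the differential drive controller of \cite{arslan_kod_WAFR2016} acting on a convex sphere world, so all of its established guarantees (positive invariance, completeness, monotone decrease of Euclidean distance to the goal, and almost-global convergence with the only other equilibria being nondegenerate saddles) hold verbatim in the model layer. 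The whole proof is then the pullback of these facts.

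First I would dispatch well-posedness. The passage \eqref{eq:control_unicycle} from reference inputs $(\hat{v},\hat{\omega})$ to actual inputs $(v,\omega)$ requires $\|\mathbf{e}(\overline{\mathbf{x}})\|\neq 0$ and $\partial\xi/\partial\psi\neq 0$ everywhere on $\mathcal{F}_{map}\times S^1$. Both follow from nonsingularity of $D_{\overline{\mathbf{x}}}\overline{\mathbf{h}}$ (Proposition \ref{proposition:diffeo_se2}): that Jacobian is block lower triangular with diagonal blocks $D_\mathbf{x}\mathbf{h}$ and $\partial\xi/\partial\psi$, so its determinant is $\det(D_\mathbf{x}\mathbf{h})\cdot(\partial\xi/\partial\psi)$, forcing $\partial\xi/\partial\psi\neq 0$; and since $\mathbf{e}=D_\mathbf{x}\mathbf{h}\,[\cos\psi\ \ \sin\psi]^\top$ with $D_\mathbf{x}\mathbf{h}$ invertible by Proposition \ref{proposition:diffeomorphism} and $[\cos\psi\ \ \sin\psi]^\top\neq\mathbf{0}$, we get $\mathbf{e}\neq\mathbf{0}$. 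Lipschitzness of $(\hat{v},\hat{\omega})$ is established in \cite{arslan_kod_WAFR2016} when the virtual sensor range in the model layer is taken below $R$, and $\overline{\mathbf{h}}$, $\xi$ and the entries of $D_\mathbf{x}\mathbf{h}$ are smooth with the reciprocals above bounded away from zero on compact sets, so $(v,\omega)$ is Lipschitz on $\mathcal{F}_{map}\times S^1$; the $\mathrm{atan}$ in \eqref{eq:reference_control_omega} removes the apparent $\pm\pi$ chart discontinuity exactly as in \cite{astolfi_1999}. Hence \eqref{eq:unicycle_dynamics}--\eqref{eq:control_unicycle} generates a unique continuously differentiable local flow.

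Next, positive invariance and completeness: the controller of \cite{arslan_kod_WAFR2016} renders $\mathcal{F}_{model}\times S^1$ positively invariant, and since $\overline{\mathbf{h}}$ is a diffeomorphism \emph{onto} $\mathcal{F}_{model}\times S^1$ whose conjugacy sends closed-loop orbits to closed-loop orbits, pulling trajectories back shows $\mathcal{F}_{map}\times S^1$ is positively invariant; boundedness of $\mathcal{F}_{map}$ (because $\mathcal{W}$ is compact) together with compactness of $S^1$ rules out finite-time escape, so the flow is complete. Then convergence and monotonicity: along the model-layer flow $\|\mathbf{y}-\mathbf{x}_d\|$ is nonincreasing and, outside the union of the stable manifolds of the finitely many obstacle-associated saddles (a set of measure zero in $\mathcal{F}_{model}\times S^1$), every trajectory asymptotically reaches $\mathbf{x}_d$; since $\mathbf{y}=\mathbf{h}(\mathbf{x})$ the monotonicity statement is literally ``$\|\mathbf{h}(\mathbf{x})-\mathbf{x}_d\|$ nonincreasing,'' and because a diffeomorphism carries measure-zero sets to measure-zero sets, the $\overline{\mathbf{h}}$-pullback of the bad set is measure zero in $\mathcal{F}_{map}\times S^1$, while for every other initial configuration the trajectory converges to $\overline{\mathbf{h}}^{-1}$ of the model-layer limit, which projects to $\mathbf{x}_d$ in $\mathcal{F}_{map}$ (using $\mathbf{h}(\mathbf{x}_d)=\mathbf{x}_d$ from Assumption \ref{assumption:epsilon}-(b)). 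I expect the main obstacle to be the well-posedness step: verifying once and for all that $\partial\xi/\partial\psi$ never vanishes and that $(v,\omega)$ is genuinely Lipschitz, since $\xi$ is the angle of $D_\mathbf{x}\mathbf{h}\,[\cos\psi\ \ \sin\psi]^\top$ and hence its derivatives entangle the second derivatives of $\mathbf{h}$; this is exactly where the bookkeeping of Appendix \ref{appendix:calculation_jacobian} must be married to the nonsingularity of $D_{\overline{\mathbf{x}}}\overline{\mathbf{h}}$. Everything downstream is a direct transcription of the convex-sphere-world guarantees of \cite{arslan_kod_WAFR2016} through $\overline{\mathbf{h}}$.
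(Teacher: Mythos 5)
Your proposal is correct and reaches the same conclusions, but it proceeds by a more uniformly abstract route than the paper. You treat $\overline{\mathbf{h}}$ as a smooth conjugacy between the two closed-loop systems and simply pull back every qualitative property of the model-layer controller of \cite{arslan_kod_WAFR2016}: positive invariance because diffeomorphisms map boundaries to boundaries and closed-loop orbits to closed-loop orbits, almost-global convergence because diffeomorphisms carry the measure-zero union of saddle stable manifolds to a measure-zero set, and monotone decrease of $\|\mathbf{h}(\mathbf{x})-\mathbf{x}_d\|$ because it is literally the pullback of the model-layer Lyapunov function. The paper, by contrast, re-derives both invariance and convergence concretely in mapped-layer coordinates: for invariance it repeats the boundary inner-product computation of Proposition~\ref{proposition:positive_invariance}, using the identity \eqref{eq:differential_gradient} to show that $\nabla\beta_k(\mathbf{x}_c)^\top[v_c\cos\psi_c,\,v_c\sin\psi_c]^\top$ has the same sign as the corresponding model-layer quantity; for convergence it differentiates $V(\mathbf{x})=\|\mathbf{h}(\mathbf{x})-\mathbf{x}_d\|^2$ directly, observes that the projector $[\cos\xi,\,\sin\xi][\cos\xi,\,\sin\xi]^\top$ acts as the identity on the already-parallel vector $\mathbf{h}(\mathbf{x})-\mathrm{\Pi}_{\mathcal{LF}\cap H_\parallel}(\mathbf{x}_d)$, applies LaSalle, and then explicitly characterizes the largest invariant set in $\{\dot V=0\}$ by intersecting the $\hat v=0$ and $\hat\omega=0$ conditions to identify the stationary set. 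Your version is shorter and makes the structural point (everything is a pullback) transparent; the paper's version is more self-contained, makes no appeal to general facts about measure-zero sets and boundary-preserving diffeomorphisms, and yields an explicit description of the $SE(2)$ stationary set that you only implicitly invoke. On well-posedness you correctly identify the key identity $\partial\xi/\partial\psi=\det(D_\mathbf{x}\mathbf{h})/\|\mathbf{e}(\overline{\mathbf{x}})\|^2$ from Proposition~\ref{proposition:diffeo_se2} as the linchpin (note that the paper establishes nonvanishing of $\partial\xi/\partial\psi$ from this formula and then infers nonsingularity of $D_{\overline{\mathbf{x}}}\overline{\mathbf{h}}$, whereas you state the implication in the reverse direction — both are fine once Proposition~\ref{proposition:diffeo_se2} is in hand), and your remarks on the $\mathrm{atan}$ chart discontinuity and on Lipschitzness of the piecewise-differentiable metric projections match the paper's footnote and citations.
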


\begin{proof}
Included in Appendix \ref{appendix_sec:section_controller}.
\end{proof}
\section{Numerical Experiments}
\label{sec:simulations}
In this Section, we present numerical experiments that verify our formal results. All simulations were run in MATLAB using \texttt{ode45}, with control gain $k=0.4$ and $p=20$ for the R-function construction. The reader is also referred to our video attachment for a visualization of the examples presented here and more numerical simulations.

\begin{figure}[t]
\centering
\includegraphics[width=\textwidth]{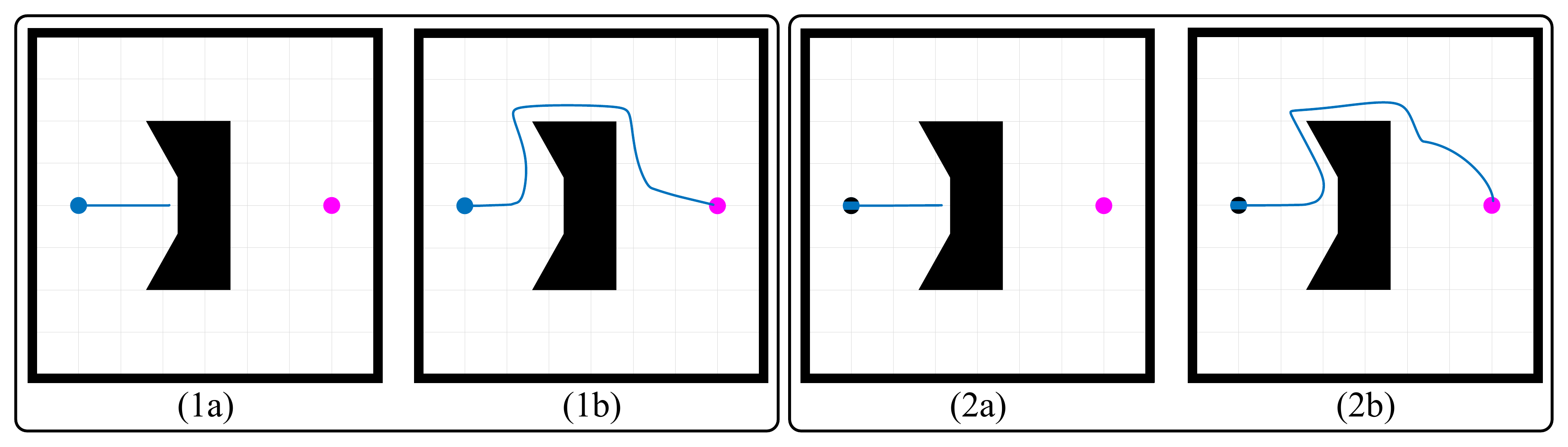}
\caption{Navigation around a U-shaped obstacle: 1) Fully actuated particle: (a) Original doubly reactive algorithm \cite{arslan_kod_WAFR2016}, (b) Our algorithm, 2) Differential drive robot: (a) Original doubly reactive algorithm \cite{arslan_kod_WAFR2016}, (b) Our algorithm.} \label{fig:ushaped}
\end{figure}

\begin{figure}[p]
\centering
\includegraphics[width=.8\textwidth]{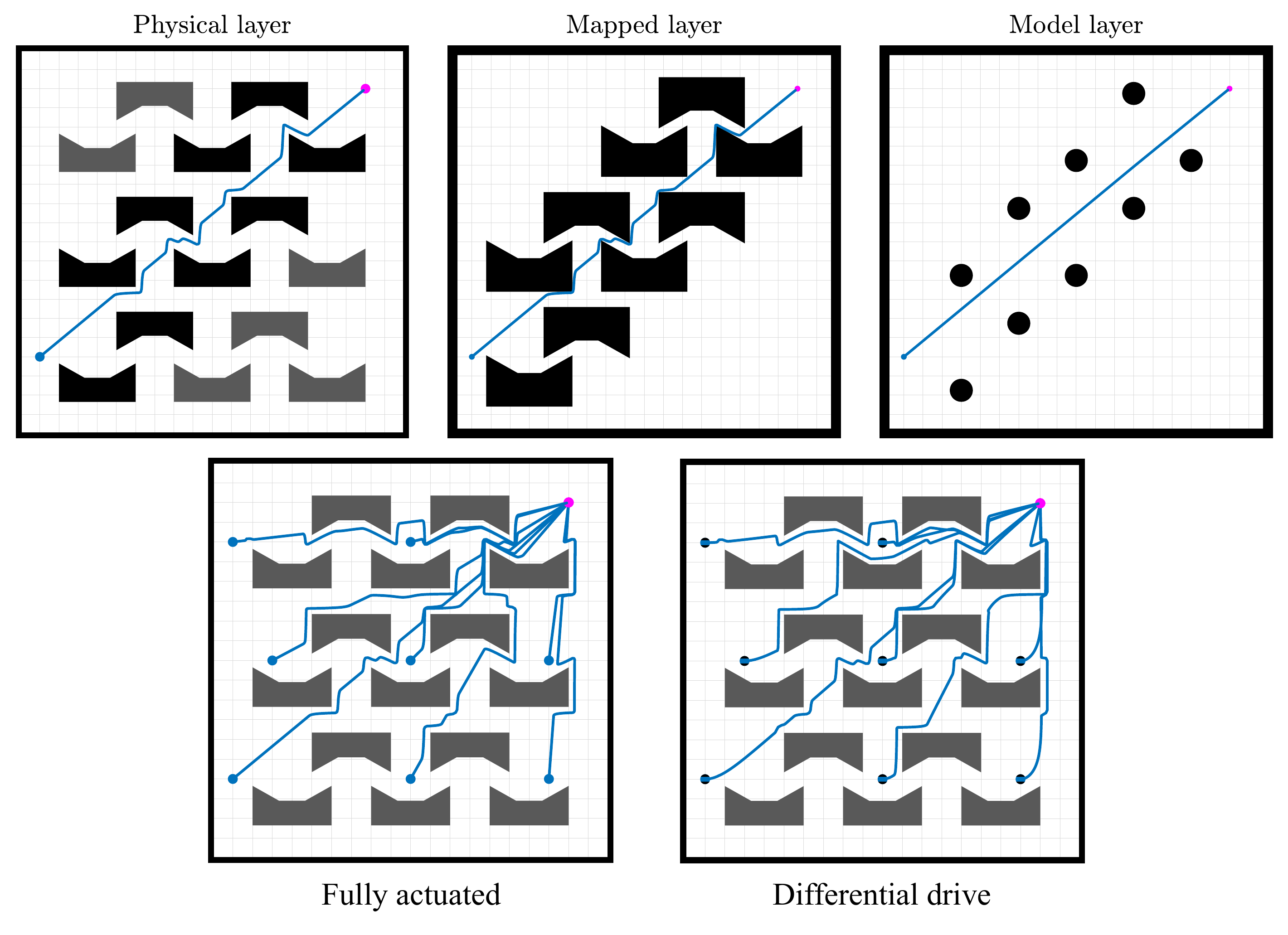}
\caption{Navigation in a cluttered environment with U-shaped obstacles. Top - Trajectories in the physical, mapped and model layers from a particular initial condition. Bottom - Convergence to the goal from several initial conditions: left - fully actuated robot, right - differential drive robot.} \label{fig:narrowpath}
\end{figure}

\begin{figure}[p]
\centering
\includegraphics[width=.8\textwidth]{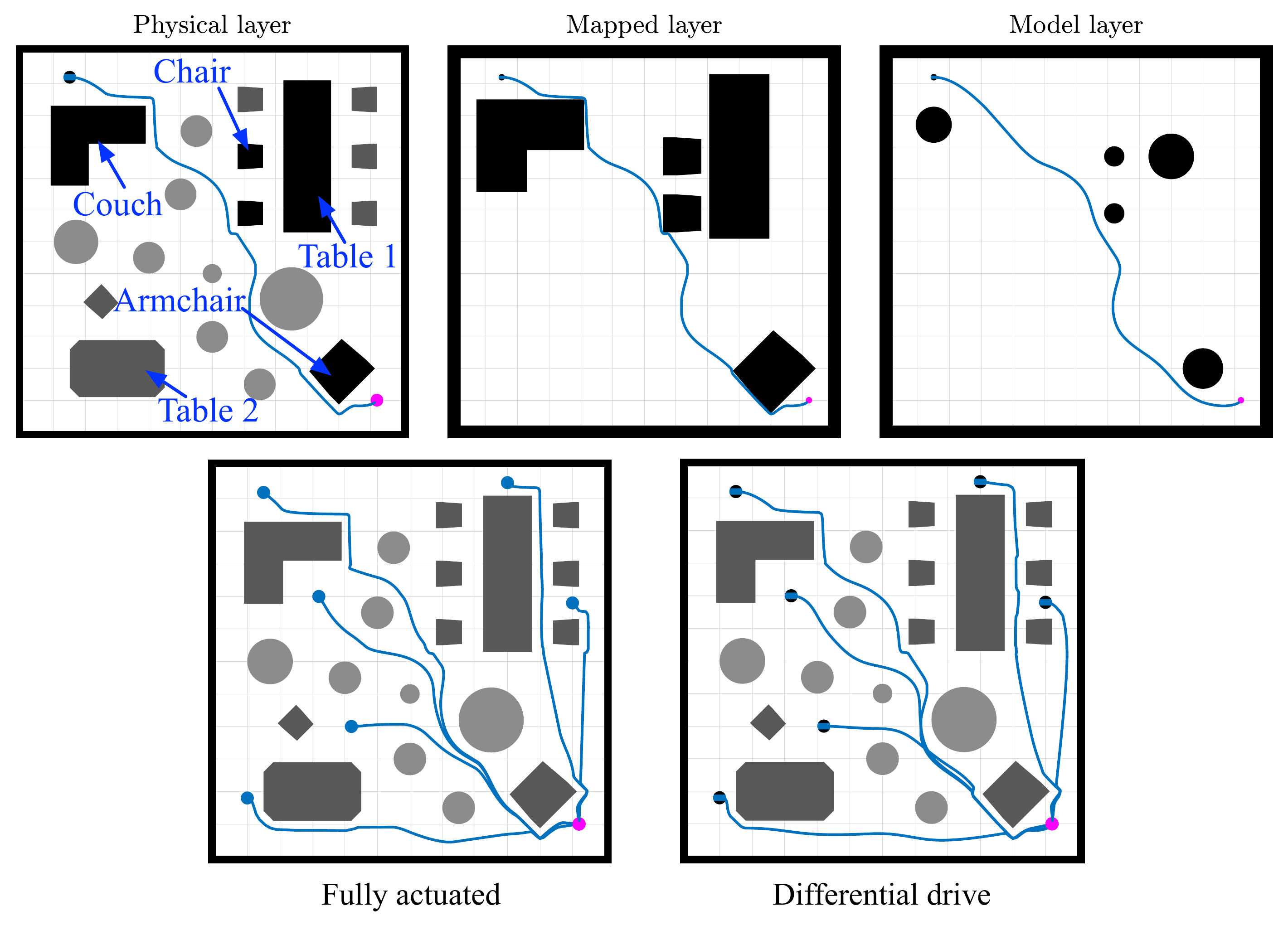}
\caption{Navigating a room cluttered with known star-shaped and unknown convex obstacles. Top - Trajectories in the physical, mapped and model layers from a particular initial condition. Bottom - Convergence to the goal from several initial conditions: left - fully actuated robot, right - differential drive robot. Mapped obstacles are shown in black, known obstacles in dark grey and unknown obstacles in light grey.} \label{fig:mixedstarconvex}
\end{figure}

\subsection{Comparison with Original Doubly Reactive Algorithm}
We begin with a comparison of our algorithm performance with the standalone version of the doubly reactive algorithm in \cite{arslan_kod_WAFR2016}, that we use in our construction. Fig. \ref{fig:ushaped} demonstrates the basic limitation of this algorithm; in the presence of a non-convex obstacle or a flat surface, whose curvature violates \cite[Assumption 2]{arslan_kod_WAFR2016}, the robot gets stuck in undesired local minima. On the contrary, our algorithm is capable of overcoming this limitation, on the premise that the robot can recognize the obstacle with star-shaped geometry at hand. The robot radius is $0.2$m and the value of $\varepsilon$ used for the obstacle is $0.3$.

\subsection{Navigation in a Cluttered Non-Convex Environment}
In the next set of numerical experiments, we evaluate the performance of our algorithm in a cluttered environment, packed with instances of the same U-shaped obstacle, with star-shaped geometry, we use in Fig. \ref{fig:ushaped}. Both the fully actuated and the differential drive robot are capable of converging to the desired goal from a variety of initial conditions, as shown in Fig. \ref{fig:narrowpath}. In the same figure, we also focus on a particular initial condition and include the trajectories observed in the physical, mapped and model layers. The robot radius is $0.25$m and value of $\varepsilon$ used for all the star-shaped obstacles in the environment is $0.3$.

\subsection{Navigation Among Mixed Star-Shaped and Convex Obstacles}
Finally, we report experiments in an environment cluttered with both star-shaped obstacles (with known geometry) and unknown convex obstacles. We consider a robot of radius $0.2$m navigating a room towards a goal. The robot can recognize familiar star-shaped obstacles (e.g., the couch, tables, armchair, chairs) but is unaware of several other convex obstacles in the environment. Fig. \ref{fig:mixedstarconvex} summarizes our results for several initial conditions. We also include trajectories observed in the physical, mapped and model layers during a single run. The value of $\varepsilon$ used for all the star-shaped obstacles in the environment is $0.3$.

\section{Conclusion and Future Work}
\label{sec:conclusion}
In this paper, we present a provably correct method for robot navigation in 2D environments cluttered with familiar but unexpected non-convex, star-shaped obstacles as well as completely unknown, convex obstacles. The robot uses a limited range onboard sensor, capable of recognizing, localizing and generating online from its catalogue of the familiar, non-convex shapes an implicit representation of each one. These sensory data and their interpreted representations underlie an online change of coordinates to a completely convex model planning space wherein a previously developed online construction yields a provably correct reactive controller that is pulled back to the physically sensed representation to generate the actual robot commands. Using a modified change of coordinates, the construction is also extended to differential drive robots, and numerical simulations further verify the validity of our formal results.

Experimental validation of our algorithm with deep learning techniques for object pose and triangular mesh recognition \cite{Pavlakos2017} is currently underway. Next steps target environments presenting geometry more complicated than star-shaped obstacles, by appropriately modifying the purging transformation algorithm for trees-of-stars, presented in \cite{rimon1992}. Future work aims to relax the required degree of partial knowledge and the separation assumptions needed for our formal results, by merging the ``implicit representation trees'' (e.g. see Fig. \ref{fig:rfunctions} in Appendix \ref{appendix:rfunctions}) online, when needed.

%
%
%
%
%
%
%
\bibliographystyle{splncs04}
\bibliography{references}

\appendix
\section{Proofs}
\label{appendix:proofs}

\subsection{Proofs of results in Section \ref{sec:geometric_transformation}}
\label{appendix_sec:section_geometric}

\begin{proof}[Proof of Lemma \ref{lemma:smooth_map}]
Since both the switches $\sigma_j$ and the deforming factors $\nu_j$ are smooth, for $j=1,\ldots,M$, the only technical challenge here is introduced by the fact that the number $M$ of discovered star-shaped obstacles in $\mathcal{F}_{map}$ is not constant and changes as the robot navigates the workspace.

Notice from \eqref{eq:zeta_derivative} that all the derivatives of $\eta_j$ used in the construction of the switch $\sigma_j$ for any $j$ are zero if and only if $\eta_j$ is zero. Therefore, in order to guarantee smoothness of $\mathbf{h}$, we just have to ensure that when a new obstacle $k$ is added to the semantic map, the value of $\sigma_k$ will be zero. This follows directly from the assumption that the sensor range $R$ is much greater than $\varepsilon_k$, which implies that when obstacle $k$ is discovered, the robot position $\mathbf{x}$ will lie outside the set $\{ \mathbf{q}\in\mathcal{F}_{map} \, | \, 0 \leq \beta_k(\mathbf{q}) < \varepsilon_k \}$ and therefore the value of $\sigma_k$ will be zero.
\end{proof}

\begin{proof}[Proof of Proposition \ref{proposition:diffeomorphism}]
First of all, the map $\mathbf{h}$ is smooth as shown in Lemma \ref{lemma:smooth_map}. Therefore, in order to prove that $\mathbf{h}$ is a $C^\infty$ diffeomorphism, we will follow the procedure outlined in \cite{massey1992}, also followed in \cite{Rimon_Koditschek_1989}, to show that
\begin{enumerate}
\item $\mathbf{h}$ has a non-singular differential on $\mathcal{F}_{map}$
\item $\mathbf{h}$ preserves boundaries, i.e., $\mathbf{h}(\partial_j\mathcal{F}_{map}) \subset \partial_j \mathcal{F}_{model}, j\in \{0,\ldots,M+N\}$.\footnote{Here we denote by $\partial_j \mathcal{F}$ the $j$-th connected component of the boundary of $\mathcal{F}$ (that corresponding to $\tilde{O}_j$), with $\partial_0\mathcal{F}$ the outer boundary of $\mathcal{F}$.}
\item the boundary components of $\mathcal{F}_{map}$ and $\mathcal{F}_{model}$ are pairwise homeomorphic, i.e. $\partial_j\mathcal{F}_{map} \cong \partial_j\mathcal{F}_{model}, j\in \{0,\ldots,M+N\}$.
\end{enumerate}

We begin with property 1. Using Lemma \ref{lemma:nonzero_switches} and observing from \eqref{eq:sigmaj} and \eqref{eq:grad_sigmaj} that a switch $\sigma_k, k \in \{1,\ldots,M\}$ is zero if and only if its gradient $\nabla \sigma_k$ is zero, we observe from \eqref{eq:map_differential} that $D_\mathbf{x}\mathbf{h}$ is either the identity map (which is non-singular) or depends only a single switch $\sigma_k, k \in \{1,\ldots,M\}$ when $0  \leq \beta_k(\mathbf{x})<\varepsilon_k$. In that case, we can isolate the $k$-th term in \eqref{eq:map_differential} and write the map differential as
\begin{align}
D_\mathbf{x}\mathbf{h} = D_\mathbf{x}\mathbf{h}_k = & \left[1+\sigma_k(\mathbf{x}) (\nu_k(\mathbf{x})-1)\right] \mathbf{I} + (\mathbf{x}-\mathbf{x}^*_k)  \left[\sigma_k(\mathbf{x})\nabla \nu_k(\mathbf{x})^\top \right. \nonumber \\
& \left.+ (\nu_k(\mathbf{x})-1) \nabla \sigma_k(\mathbf{x})^\top \right] \nonumber \\
=& \left[1+\sigma_k(\mathbf{x}) (\nu_k(\mathbf{x})-1)\right] \mathbf{I} + (\mathbf{x}-\mathbf{x}^*_k) \left[-\frac{\rho_k \sigma_k(\mathbf{x})}{||\mathbf{x}-\mathbf{x}^*_k||^3}(\mathbf{x}-\mathbf{x}^*_k)^\top\right. \nonumber \\ 
& \left. + \eta_k'(\beta_k(\mathbf{x}))(\nu_k(\mathbf{x})-1) \nabla \beta_k(\mathbf{x})^\top \right]
\end{align}
From this expression, we can find with some computation
\begin{align}
\text{tr}(D_\mathbf{x}\mathbf{h}_k) = & [1+\sigma_k(\mathbf{x})(\nu_k(\mathbf{x})-1)]+(1-\sigma_k(\mathbf{x})) \nonumber \\ 
& + \eta_k'(\beta_k(\mathbf{x}))(\nu_k(\mathbf{x})-1)(\mathbf{x}-\mathbf{x}^*_k)^\top \nabla \beta_k(\mathbf{x})
\end{align}
However, we know that
\begin{equation}
\frac{\sigma_k(\mathbf{x})-1}{\sigma_k(\mathbf{x})} \leq 0 < \nu_k(\mathbf{x})
\end{equation}
since $0<\sigma_k(\mathbf{x}) \leq 1$, giving $1+\sigma_k(\mathbf{x})(\nu_k(\mathbf{x})-1)>0$. Also, $\eta_k'(\beta_k(\mathbf{x})) < 0$ by construction (since $\beta_k(\mathbf{x})<\varepsilon_k$), $\nu_k(\mathbf{x})-1<0$ and $(\mathbf{x}-\mathbf{x}^*_k)^\top \nabla \beta_k(\mathbf{x}) > 0$ in the set $\{\mathbf{x} \in \mathcal{F}_{map} \, | \, 0 \leq \beta_k(\mathbf{x}) < \varepsilon_k \}$, because of Assumption \ref{assumption:epsilon}-(c). Therefore, we get $\text{tr}(D_\mathbf{x}\mathbf{h}_k)>0$ for all $\mathbf{x}$ such that $0 \leq \beta_k(\mathbf{x})<\varepsilon_k$. Also, since $\mathcal{F}_{map} \subset \mathbb{R}^2$, we can similarly compute 
\begin{align}
\text{det}(D_\mathbf{x}\mathbf{h}_k) = & \, g_k'(\beta_k(\mathbf{x}))(\nu_k(\mathbf{x})-1)[1+\sigma_k(\mathbf{x})(\nu_k(\mathbf{x})-1)](\mathbf{x}-\mathbf{x}^*_k)^\top \nabla \beta_k(\mathbf{x}) \nonumber \\
& + (1-\sigma_k(\mathbf{x}))[1+\sigma_k(\mathbf{x})(\nu_k(\mathbf{x})-1)]
\end{align}
which leads to $\text{det}(D_\mathbf{x}\mathbf{h}_k) > 0$ for all $\mathbf{x}$ such that $\beta_k(\mathbf{x})<\varepsilon_k$. Since $\text{det}(D_\mathbf{x}\mathbf{h}_k) > 0$ and $\text{tr}(D_\mathbf{x}\mathbf{h}_k)>0$, we conclude that $D_\mathbf{x}\mathbf{h}_k$ has two strictly positive eigenvalues in the set $\{\mathbf{x} \in \mathcal{F}_{map} \, | \, 0 \leq \beta_k(\mathbf{x}) < \varepsilon_k \}$. Since this is true for any $k \in \{ 1, \ldots, M \}$, it follows that $D_\mathbf{x}\mathbf{h}$ has two strictly positive eigenvalues in $\mathcal{F}_{map}$ and, thus, is non-singular in $\mathcal{F}_{map}$.

Next, pick a point $\mathbf{x} \in \partial_j\mathcal{F}_{map}$ for any $j \in \{0,\ldots,M+N\}$. This point could lie on the outer boundary of $\mathcal{F}_{map}$, on the boundary of one of the $N$ unknown but visible convex obstacles, or on the boundary of one of the $M$ star-shaped obstacles. In the first two cases, we have $\mathbf{h}(\mathbf{x})=\mathbf{x}$, while in the latter case
\begin{equation}
\mathbf{h}(\mathbf{x}) = \mathbf{x}^*_k+\frac{\rho_k}{||\mathbf{x}-\mathbf{x}^*_k||}(\mathbf{x}-\mathbf{x}^*_k)
\end{equation}
for some $k \in \{1,\ldots,M\}$, sending $\mathbf{x}$ to the boundary of the $k$-th disk in $\mathcal{F}_{model}$. This shows that we always have $\mathbf{h}(\mathbf{x}) \in \partial_j\mathcal{F}_{model}$ and, therefore, the map satisfies property 2.

Finally, property 3 derives from above and the fact that each boundary segment $\partial_j\mathcal{F}_{map}$ is an one-dimensional manifold, the boundary of either a convex set or a star-shaped set, both of which are homeomorphic to the corresponding boundary $\partial_j\mathcal{F}_{model}$.
\end{proof}

\subsection{Proofs of results in Section \ref{sec:reactive_controller}}
\label{appendix_sec:section_controller}

\begin{proof}[Proof of Proposition \ref{proposition:positive_invariance}]
Since $\mathbf{h}$ is just the identity transformation away from any star-shaped obstacle and the control law $\mathbf{u}$ guarantees collision avoidance in that case, as shown in \cite{arslan_kod_WAFR2016}, it suffices to show that the robot can never penetrate any star-shaped obstacle, i.e., for any $\mathbf{x}_c$ such that $\beta_k(\mathbf{x}_c)=0$ for some $k \in \{1,\ldots,M\}$, we have $\mathbf{u}(\mathbf{x}_c)^\top \nabla \beta_k(\mathbf{x}_c) \geq 0$. For such a point $\mathbf{x}_c$, we get from \eqref{eq:deforming_factors} and \eqref{eq:map_differential}
\begin{align}
D_{\mathbf{x}}\mathbf{h}(\mathbf{x}_c) = D_{\mathbf{x}}\mathbf{h}_k(\mathbf{x}_c) = & \left[1+\sigma_k(\mathbf{x}_c) (\nu_k(\mathbf{x}_c)-1)\right] \mathbf{I} + (\mathbf{x}_c-\mathbf{x}^*_k) \left[\sigma_k(\mathbf{x}_c) \nabla \nu_k(\mathbf{x}_c)^\top \nonumber \right. \\
& + \left.(\nu_k(\mathbf{x}_c)-1) \nabla \sigma_k(\mathbf{x}_c)^\top\right] \nonumber \\
=& \left[1+\sigma_k(\mathbf{x}_c) (\nu_k(\mathbf{x}_c)-1)\right] \mathbf{I} \nonumber \\
& + (\mathbf{x}_c-\mathbf{x}^*_k) \left[-\frac{\rho_k \sigma_k(\mathbf{x}_c)}{||\mathbf{x}_c-\mathbf{x}^*_k||^3} (\mathbf{x}_c-\mathbf{x}^*_k)^\top\right. \nonumber \\ 
&\left. + \eta_k'(\beta_k(\mathbf{x}_c))(\nu_k(\mathbf{x}_c)-1) \nabla \beta_k(\mathbf{x}_c)^\top \right] \nonumber \\
= & \frac{\rho_k}{||\mathbf{x}_c-\mathbf{x}^*_k||} \mathbf{I} + (\mathbf{x}_c-\mathbf{x}^*_k) \left[-\frac{\rho_k}{||\mathbf{x}_c-\mathbf{x}^*_k||^3}(\mathbf{x}_c-\mathbf{x}^*_k)^\top \right. \nonumber \\ 
& \left. + \eta_k'(\beta_k(\mathbf{x}_c))(\nu_k(\mathbf{x}_c)-1) \nabla \beta_k(\mathbf{x}_c)^\top \right]
\end{align}
since $\sigma_k(\mathbf{x}_c)=1$. Since $\mathcal{F}_{map} \subset \mathbb{R}^2$, we can explicitly compute the inverse of the 2x2 matrix $D_{\mathbf{x}}\mathbf{h}_k(\mathbf{x}_c)$ from its four elements $[D_{\mathbf{x}}\mathbf{h}_k(\mathbf{x}_c)]_{11}$, $[D_{\mathbf{x}}\mathbf{h}_k(\mathbf{x}_c)]_{12}$, $[D_{\mathbf{x}}\mathbf{h}_k(\mathbf{x}_c)]_{21}$, $[D_{\mathbf{x}}\mathbf{h}_k(\mathbf{x}_c)]_{22}$ as
\begin{equation}
D_{\mathbf{x}}\mathbf{h}_k(\mathbf{x}_c)^{-1} = \frac{1}{\text{det}(D_{\mathbf{x}}\mathbf{h}_k(\mathbf{x}_c)))} \begin{bmatrix}
[D_{\mathbf{x}}\mathbf{h}_k(\mathbf{x}_c)]_{22} & -[D_{\mathbf{x}}\mathbf{h}_k(\mathbf{x}_c)]_{12} \\ -[D_{\mathbf{x}}\mathbf{h}_k(\mathbf{x}_c)]_{21} & [D_{\mathbf{x}}\mathbf{h}_k(\mathbf{x}_c)]_{11}
\end{bmatrix}
\end{equation}
and after some simple computations, we can eventually find
\begin{equation}
[D_{\mathbf{x}}\mathbf{h}_k(\mathbf{x}_c)]^{-\top} \nabla \beta_k(\mathbf{x}_c) = \frac{\rho_k \, (\mathbf{x}_c-\mathbf{x}^*_k)^\top \nabla \beta_k(\mathbf{x}_c)}{||\mathbf{x}_c-\mathbf{x}^*_k||^3 \, \text{det}(D_{\mathbf{x}}\mathbf{h}_k(\mathbf{x}_c)))} (\mathbf{x}_c-\mathbf{x}^*_k) \label{eq:differential_gradient}
\end{equation}
On the other hand, 
\begin{equation}
\mathbf{u}(\mathbf{x}_c) = -k\, [D_{\mathbf{x}}\mathbf{h}_k(\mathbf{x}_c)]^{-1} \, \left(\mathbf{h}(\mathbf{x}_c) - \mathrm{\Pi}_{\mathcal{LF}(\mathbf{h}(\mathbf{x}_c))} (\mathbf{x}_d) \right)
\end{equation}
Since $\mathbf{x}_c$ belongs to the boundary of the obstacle $k$, then by construction of the diffeomorphism, $\mathbf{h}(\mathbf{x}_c)$ will belong to the boundary of the disk with radius $\rho_k$ centered at $\mathbf{x}^*_k$ and the associated hyperplane \cite{arslan_kod_WAFR2016} will be tangent to that disk at $\mathbf{h}(\mathbf{x}_c)$. Therefore, the projected goal $\mathrm{\Pi}_{\mathcal{LF}(\mathbf{h}(\mathbf{x}_c))} (\mathbf{x}_d)$ will belong to the halfspace defined by the outward normal vector from $\mathbf{x}^*_k$ to $\mathbf{h}(\mathbf{x}_c)$ at $\mathbf{h}(\mathbf{x}_c)$ and we have
\begin{equation}
\mathbf{u}(\mathbf{x}_c) =  [D_{\mathbf{x}}\mathbf{h}_k(\mathbf{x}_c)]^{-1} \, \mathbf{t}(\mathbf{x}_c)
\end{equation}
with $ \mathbf{t}(\mathbf{x}_c)^\top (\mathbf{h}(\mathbf{x}_c) - \mathbf{x}^*_k) \geq 0$. Since by construction of the diffeomorphism $\mathbf{h}(\mathbf{x}_c) = \mathbf{x}^*_k + \rho_k \frac{\mathbf{x}_c - \mathbf{x}^*_k}{||\mathbf{x}_c - \mathbf{x}^*_k||}$, we derive that
\begin{equation}
\mathbf{t}(\mathbf{x}_c)^\top (\mathbf{x}_c - \mathbf{x}^*_k) \geq 0 \label{eq:controlinnerproduct}
\end{equation}

Using the above results, we see that
\begin{align}
\mathbf{u}(\mathbf{x}_c)^\top \nabla \beta_k(\mathbf{x}_c) = & \left[\left[D_{\mathbf{x}}\mathbf{h}_k(\mathbf{x}_c)\right]^{-\top}\nabla \beta_k(\mathbf{x}_c) \right]^\top \mathbf{t}(\mathbf{x}_c) \nonumber \\
= & \frac{\rho_k \, (\mathbf{x}_c-\mathbf{x}^*_k)^\top \nabla \beta_k(\mathbf{x}_c)}{||\mathbf{x}_c-\mathbf{x}^*_k||^3 \, \text{det} \left(D_{\mathbf{x}}\mathbf{h}_k(\mathbf{x}_c)\right)} (\mathbf{x}_c-\mathbf{x}^*_k)^\top \mathbf{t}(\mathbf{x}_c) \geq 0
\end{align}
using \eqref{eq:controlinnerproduct} and the fact that $(\mathbf{x_c}-\mathbf{x}^*_k)^\top \nabla \beta_k(\mathbf{x}_c) >0$, since $\mathbf{x}_c$ belongs to the boundary of a star-shaped obstacle \cite{Rimon_Koditschek_1989}.
\end{proof}

\begin{proof}[Proof of Lemma \ref{lemma:equilibrium_stability}]
The proof of this lemma derives immediately from \cite[Propositions 5,11]{arslan_kod_WAFR2016}, from which we can infer that the set of stationary points of the vector field $D_\mathbf{x}\mathbf{h} \cdot \mathbf{u}(\mathbf{x})$, defined on $\mathcal{F}_{model}$, is $\{ \mathbf{x}_d\} \bigcup \{\mathbf{s}_j\}_{j \in \{1,\ldots,M\}} \bigcup_{i=1}^N \mathcal{G}_i$, with $\mathbf{x}_d$ being a locally stable equilibrium of $D_\mathbf{x}\mathbf{h} \cdot \mathbf{u}(\mathbf{x})$ and each other point being a nondegenerate saddle, since \cite[Assumption 2]{arslan_kod_WAFR2016} is satisfied for the obstacles in $\mathcal{F}_{model}$ by construction. To complete the proof, we just have to note that the index of an isolated zero of a vector field does not change under diffeomorphisms of the domain \cite{hirsch_1976}.
\end{proof}

\begin{proof}[Proof of Proposition \ref{proposition:attraction}]
Consider the smooth Lyapunov function candidate $V(\mathbf{x}) = ||\mathbf{h}(\mathbf{x})-\mathbf{x}_d||^2$, justified by the fact that $\mathbf{h}(\mathbf{x}_d) = \mathbf{x}_d$ by construction of the diffeomorphism, since we have assumed that $\beta_j(\mathbf{x}_d) > \varepsilon_j$ for all $j \in \{1,\ldots,M\}$. Using \eqref{eq:controller}
\begin{align}
\frac{dV}{dt} = & 2(\mathbf{h}(\mathbf{x})-\mathbf{x}_d)^\top(\mathbf{D}_\mathbf{x}\mathbf{h})\dot{\mathbf{x}} = -2k(\mathbf{h}(\mathbf{x})-\mathbf{x}_d)^\top \left(\mathbf{h}(\mathbf{x}) - \mathrm{\Pi}_{\mathcal{LF}(\mathbf{h}(\mathbf{x}))} (\mathbf{x}_d) \right) \nonumber \\
= & -2k\left(\mathbf{h}(\mathbf{x})-\mathrm{\Pi}_{\mathcal{LF}(\mathbf{h}(\mathbf{x}))} (\mathbf{x}_d)+\mathrm{\Pi}_{\mathcal{LF}(\mathbf{h}(\mathbf{x}))} (\mathbf{x}_d)-\mathbf{x}_d \right)^\top \left(\mathbf{h}(\mathbf{x}) - \mathrm{\Pi}_{\mathcal{LF}(\mathbf{h}(\mathbf{x}))} (\mathbf{x}_d) \right) \nonumber \\
= & -2k || \mathbf{h}(\mathbf{x})-\mathrm{\Pi}_{\mathcal{LF}(\mathbf{h}(\mathbf{x}))} (\mathbf{x}_d) ||^2 \nonumber \\
& +2k \left( \mathbf{x}_d - \mathrm{\Pi}_{\mathcal{LF}(\mathbf{h}(\mathbf{x}))} (\mathbf{x}_d) \right)^\top\left(\mathbf{h}(\mathbf{x}) - \mathrm{\Pi}_{\mathcal{LF}(\mathbf{h}(\mathbf{x}))} (\mathbf{x}_d) \right) \nonumber \\
\leq & -2k || \mathbf{h}(\mathbf{x})-\mathrm{\Pi}_{\mathcal{LF}(\mathbf{h}(\mathbf{x}))} (\mathbf{x}_d) ||^2 \leq 0
\end{align}
since $\mathbf{h}(\mathbf{x}) \in \mathcal{LF}(\mathbf{h}(\mathbf{x}))$, which implies that 
\begin{equation}
\left( \mathbf{x}_d - \mathrm{\Pi}_{\mathcal{LF}(\mathbf{h}(\mathbf{x}))} (\mathbf{x}_d) \right)^\top\left(\mathbf{h}(\mathbf{x}) - \mathrm{\Pi}_{\mathcal{LF}(\mathbf{h}(\mathbf{x}))} (\mathbf{x}_d) \right) \leq 0
\end{equation}
since either $\mathbf{x}_d = \mathrm{\Pi}_{\mathcal{LF}(\mathbf{h}(\mathbf{x}))} (\mathbf{x}_d)$, or $\mathbf{x}_d$ and $\mathbf{h}(\mathbf{x})$ are separated by a hyperplane passing through $\mathrm{\Pi}_{\mathcal{LF}(\mathbf{h}(\mathbf{x}))} (\mathbf{x}_d)$. Therefore, similarly to \cite{arslan_kod_WAFR2016}, using LaSalle's invariance principle we see that every trajectory starting in $\mathcal{F}_{map}$ approaches the largest invariant set in $\{ \mathbf{x} \in \mathcal{F}_{map} \, | \, \dot{V}(\mathbf{x})=0 \}$, i.e. the equilibrium points of \eqref{eq:controller}. The desired result follows from Lemma \ref{lemma:equilibrium_stability}, since $\mathbf{x}_d$ is the only locally stable equilibrium of our control law and the rest of the stationary points are nondegenerate saddles, whose regions of attraction have empty interior in $\mathcal{F}_{map}$.
\end{proof}

\begin{proof}[Proof of Proposition \ref{proposition:diffeo_se2}]
Note that the jacobian of $\overline{\mathbf{h}}$ will be given by
\begin{equation}
D_{\overline{\mathbf{x}}} \overline{\mathbf{h}} = \begin{bmatrix}
D_\mathbf{x}\mathbf{h} & \vline & \mathbf{0}_{2 \times 1} \\ \hline D_\mathbf{x}\xi & \vline & \dfrac{\partial \xi}{\partial \psi}
\end{bmatrix} \label{eq:differential_se2}
\end{equation}
Since we have already shown in Proposition \ref{proposition:diffeomorphism} that $D_\mathbf{x}\mathbf{h}$ is non-singular, it suffices to show that $\frac{\partial \xi}{\partial \psi} \neq 0$ for all $\overline{\mathbf{x}} \in \mathcal{F}_{map} \times S^1$. From \eqref{eq:phi_definition} we can derive
\begin{equation}
\frac{\partial \xi}{\partial \psi} = \frac{\text{det}(D_\mathbf{x}\mathbf{h})}{||\mathbf{e}(\overline{\mathbf{x}})||^2}
\end{equation}
Therefore, we immediately get that $\frac{\partial \xi}{\partial \psi} \neq 0$ for all $\overline{\mathbf{x}} \in \mathcal{F}_{map} \times S^1$ since $\text{det}(D_\mathbf{x}\mathbf{h}) \neq 0$ and $||\mathbf{e}(\overline{\mathbf{x}})|| \neq 0$ for all $\mathbf{x} \in \mathcal{F}_{map}$, because $D_\mathbf{x}\mathbf{h}$ is non-singular on $\mathcal{F}_{map}$. This implies that $D_{\overline{\mathbf{x}}}\overline{\mathbf{h}}$ is non-singular on $\mathcal{F}_{map} \times S^1$.

Next, we note that $\partial\left(\mathcal{F}_{map} \times S^1 \right) = \partial \mathcal{F}_{map} \times S^1$, since $S^1$ is a manifold without boundary. Similarly, $\partial\left(\mathcal{F}_{model} \times S^1 \right) = \partial \mathcal{F}_{model} \times S^1$. Hence, we can easily complete the proof following a similar procedure with the end of the proof of Proposition \ref{proposition:diffeomorphism}.
\end{proof}

\begin{proof}[Proof of Theorem \ref{theorem:control_se2}]
We have already established that $||\mathbf{e}(\overline{\mathbf{x}})||$ and $\frac{\partial \xi}{\partial \psi}$ are nonzero for all $\overline{\mathbf{x}} \in \mathcal{F}_{map} \times S^1$ in the proof of Proposition \ref{proposition:diffeo_se2}, which implies that $v$ and $\omega$ can have no singular points. Also notice that $||\mathbf{e}(\overline{\mathbf{x}})||$, $\frac{\partial \xi}{\partial \psi}$ and $D_\mathbf{x}\xi\begin{bmatrix}
\cos\psi & \sin\psi
\end{bmatrix}^\top$ are all smooth. Hence, the uniqueness and existence of the flow generated by control law \eqref{eq:control_unicycle} can be established similarly to \cite{arslan_kod_WAFR2016} through the flow properties of the controller in \cite{astolfi_1999} (that we use here in \eqref{eq:control_unicycle_reference}) and the facts that metric projections onto moving convex cells are piecewise continuously differentiable \cite{Kuntz-1994,shapiro-1988} and the composition of piecewise continuously differentiable functions is piecewise continuously differentiable and, therefore, locally Lipschitz \cite{chaney-1990}.

Positive invariance of $\mathcal{F}_{map} \times S^1$ can be proven following similar patterns with the proof of Proposition \ref{proposition:positive_invariance}. Namely, it suffices to show that the robot can never penetrate an obstacle, i.e., for any placement $(\mathbf{x}_c,\psi_c)$ such that $\beta_k(\mathbf{x}_c)=0$ for some index $k \in \{1,\ldots,M\}$, we definitely have 
\begin{equation}
\nabla \beta_k(\mathbf{x}_c)^\top \begin{bmatrix}
v_c \cos\psi_c \\ v_c \sin\psi_c
\end{bmatrix}\geq 0
\end{equation}
for any $\psi_c \in S^1$. We know from \eqref{eq:vector_field_se2} that
\begin{equation}
\begin{bmatrix}
v_c \cos\psi_c \\ v_c \sin\psi_c
\end{bmatrix} = [D_{\mathbf{x}}\mathbf{h}(\mathbf{x}_c)]^{-1} \begin{bmatrix}
\hat{v}_c \cos\varphi_c \\ \hat{v}_c \sin\varphi_c
\end{bmatrix}
\end{equation}
Therefore
\begin{align}
\nabla \beta_k(\mathbf{x}_c)^\top \begin{bmatrix}
v_c \cos\psi_c \\ v_c \sin\psi_c
\end{bmatrix} = & \nabla \beta_k(\mathbf{x}_c)^\top \left([D_{\mathbf{x}}\mathbf{h}(\mathbf{x}_c)]^{-1} \begin{bmatrix}
\hat{v}_c \cos\varphi_c \\ \hat{v}_c \sin\varphi_c
\end{bmatrix}\right) \nonumber \\
= & \left([D_{\mathbf{x}}\mathbf{h}(\mathbf{x}_c)]^{-\top} \nabla \beta_k(\mathbf{x}_c) \right)^\top \begin{bmatrix}
\hat{v}_c \cos\varphi_c \\ \hat{v}_c \sin\varphi_c
\end{bmatrix} \nonumber \\
= & \frac{\rho_k \, (\mathbf{x}_c-\mathbf{x}^*_k)^\top \nabla \beta_k(\mathbf{x}_c)}{||\mathbf{x}_c-\mathbf{x}^*_k||^3 \text{det}(D_{\mathbf{x}}\mathbf{h}(\mathbf{x}_c))} (\mathbf{x}_c-\mathbf{x}^*_k)^\top \begin{bmatrix}
\hat{v}_c \cos\varphi_c \\ \hat{v}_c \sin\varphi_c
\end{bmatrix}
\end{align}
using \eqref{eq:differential_gradient}. Hence, using the results from Proposition \ref{proposition:positive_invariance}, we see that positive invariance of $\mathcal{F}_{map} \times S^1$ under law \eqref{eq:control_unicycle} is equivalent to positive invariance of $\mathcal{F}_{model} \times S^1$ under law \eqref{eq:control_unicycle_reference}, which is guaranteed from \cite[Proposition 12]{arslan_kod_WAFR2016}.

Finally, consider the smooth Lyapunov function candidate $V(\mathbf{x}) = ||\mathbf{h}(\mathbf{x})-\mathbf{x}_d||^2$. Then
\begin{align}
\frac{dV}{dt} = & 2(\mathbf{h}(\mathbf{x})-\mathbf{x}_d)^\top(\mathbf{D}_\mathbf{x}\mathbf{h})\dot{\mathbf{x}} \nonumber \\
= &  2 v\, (\mathbf{h}(\mathbf{x})-\mathbf{x}_d)^\top(\mathbf{D}_\mathbf{x}\mathbf{h}) \begin{bmatrix}
\cos\psi \\ \sin\psi
\end{bmatrix} \nonumber \\
= & 2 \overline{v} (\mathbf{h}(\mathbf{x})-\mathbf{x}_d)^\top \begin{bmatrix} \cos\xi(\overline{\mathbf{x}}) \\ \sin\xi(\overline{\mathbf{x}})
\end{bmatrix} \nonumber \\
= & -2k (\mathbf{h}(\mathbf{x})-\mathbf{x}_d)^\top \begin{bmatrix} \cos\xi(\overline{\mathbf{x}}) \\ \sin\xi(\overline{\mathbf{x}})
\end{bmatrix} \begin{bmatrix} \cos\xi(\overline{\mathbf{x}}) \\ \sin\xi(\overline{\mathbf{x}})
\end{bmatrix}^\top \left(\mathbf{h}(\mathbf{x})-\mathrm{\Pi}_{\mathcal{LF}(\mathbf{h}(\mathbf{x})) \cap H_\parallel}(\mathbf{x}_d) \right) \nonumber \\
= & -2k (\mathbf{h}(\mathbf{x})-\mathbf{x}_d)^\top \left(\mathbf{h}(\mathbf{x})-\mathrm{\Pi}_{\mathcal{LF}(\mathbf{h}(\mathbf{x})) \cap H_\parallel}(\mathbf{x}_d) \right) \nonumber
\end{align}
since $\begin{bmatrix} \cos\xi(\overline{\mathbf{x}}) \\ \sin\xi(\overline{\mathbf{x}})
\end{bmatrix} \begin{bmatrix} \cos\xi(\overline{\mathbf{x}}) \\ \sin\xi(\overline{\mathbf{x}})
\end{bmatrix}^\top$ is just the projection operator on the line defined by the vector $\begin{bmatrix} \cos\xi(\overline{\mathbf{x}}) & \sin\xi(\overline{\mathbf{x}})
\end{bmatrix}^\top$, with which $\left(\mathbf{h}(\mathbf{x})-\mathrm{\Pi}_{\mathcal{LF}(\mathbf{h}(\mathbf{x})) \cap H_\parallel}(\mathbf{x}_d) \right)$ is already parallel. Following this result, we get
\begin{equation}
\frac{dV}{dt} \leq -2k \Big|\Big|\mathbf{h}(\mathbf{x})-\mathrm{\Pi}_{\mathcal{LF}(\mathbf{h}(\mathbf{x})) \cap H_\parallel}(\mathbf{x}_d)\Big|\Big|^2 \leq 0
\end{equation}
since, similarly to the proof of Proposition \ref{proposition:attraction}, we have
\begin{equation}
\left(\mathbf{x}_d - \mathrm{\Pi}_{\mathcal{LF}(\mathbf{h}(\mathbf{x})) \cap H_\parallel}(\mathbf{x}_d) \right)^\top \left(\mathbf{h}(\mathbf{x})-\mathrm{\Pi}_{\mathcal{LF}(\mathbf{h}(\mathbf{x})) \cap H_\parallel}(\mathbf{x}_d) \right) \leq 0
\end{equation}
Therefore, using LaSalle's invariance principle, we see that every trajectory starting in $\mathcal{F}_{map} \times S^1$ approaches the largest invariant set in $\{ (\mathbf{x},\psi) \in \mathcal{F}_{map} \times S^1 \, | \, \dot{V}(\mathbf{x})=0 \} = \{ (\mathbf{x},\psi) \in \mathcal{F}_{map} \times S^1 \, | \, \mathbf{h}(\mathbf{x}) = \mathrm{\Pi}_{\mathcal{LF}(\mathbf{h}(\mathbf{x})) \cap H_\parallel}(\mathbf{x}_d) \}$. At the same time, we know from \eqref{eq:control_unicycle_reference} that $\mathbf{h}(\mathbf{x}) = \mathrm{\Pi}_{\mathcal{LF}(\mathbf{h}(\mathbf{x})) \cap H_\parallel}(\mathbf{x}_d)$ implies $v=0$. From \eqref{eq:control_unicycle}, for $v=0$, we get that $\omega$ will be zero at points where $\hat{\omega}$ is zero, i.e. at points $(\mathbf{x},\psi) \in \mathcal{F}_{map} \times S^1$ where
\begin{equation}
\begin{bmatrix}
-\sin\xi(\overline{\mathbf{x}}) \\ \cos\xi(\overline{\mathbf{x}})
\end{bmatrix}^\top \left( \mathbf{h}(\mathbf{x}) - \dfrac{\mathrm{\Pi}_{\mathcal{LF}(\mathbf{h}(\mathbf{x})) \cap H_G}(\mathbf{x}_d)+\mathrm{\Pi}_{\mathcal{LF}(\mathbf{h}(\mathbf{x}))}(\mathbf{x}_d)}{2} \right) = 0
\end{equation}
Therefore the largest invariant set in $\{ (\mathbf{x},\psi) \, | \, \mathbf{h}(\mathbf{x}) = \mathrm{\Pi}_{\mathcal{LF}(\mathbf{h}(\mathbf{x})) \cap H_\parallel}(\mathbf{x}_d) \}$ is the set of points $\overline{\mathbf{x}} = (\mathbf{x},\psi)$ where the following two conditions are satisfied
\begin{align}
&  \mathbf{h}(\mathbf{x}) = \mathrm{\Pi}_{\mathcal{LF}(\mathbf{h}(\mathbf{x})) \cap H_\parallel}(\mathbf{x}_d) \\
& \begin{bmatrix}
-\sin\xi(\overline{\mathbf{x}}) \\ \cos\xi(\overline{\mathbf{x}})
\end{bmatrix}^\top \left( \mathbf{h}(\mathbf{x}) - \dfrac{\mathrm{\Pi}_{\mathcal{LF}(\mathbf{h}(\mathbf{x})) \cap H_G}(\mathbf{x}_d)+\mathrm{\Pi}_{\mathcal{LF}(\mathbf{h}(\mathbf{x}))}(\mathbf{x}_d)}{2} \right) = 0
\end{align}
Using a similar argument to \cite[Proposition 12]{arslan_kod_WAFR2016}, we can, therefore, verify that the set of stationary points of law \eqref{eq:control_unicycle} is given by 
\begin{align}
& \{\mathbf{x}_d\} \times (-\pi,\pi] \nonumber \\
&\bigcup \left\{(\mathbf{q},\psi) \, \Big | \, \mathbf{q} \in \{\mathbf{h}^{-1}(\mathbf{s}_j)\}_{j \in \{1,\ldots,M\}} \bigcup_{i=1}^N \mathcal{G}_i, \begin{bmatrix}
-\sin\xi(\mathbf{q},\psi) \\ \cos\xi(\mathbf{q},\psi)
\end{bmatrix}^\top(\mathbf{q}-\mathbf{x}_d) = 0 \right \} 
\end{align}
using \eqref{eq:saddles}. We can then invoke a similar argument to Proposition \ref{proposition:attraction} to show that $\mathbf{x}_d$ locally attracts with any orientation $\psi$, while any configuration associated with any other equilibrium point is a nondegenerate saddle whose stable manifold is a set of measure zero, and the result follows.
\end{proof}
\section{Implicit Representation of Obstacles With R-functions}
\label{appendix:rfunctions}

In this work, looking ahead toward handling in a more modular fashion the general class of obstacle shapes encompassed by the star-tree methods from the traditional navigation function literature \cite{Rimon_Koditschek_1989,rimon1992}, we depart from individuated homogeneous implicit function representation of our memorized catalogue elements in favor of the R-function compositions \cite{Rvachev_1963}, explored by Rimon \cite{Rimon_1990} and explicated within the field of constructive solid geometry by Shapiro \cite{shapiro2007}. We believe that this modular representation of shape will be helpful in the effort now in progress to instantiate the posited mapping oracle for obstacles with known geometry, whose triangular mesh can be identified in real time using state-of-the-art techniques \cite{Kong_Lin_Lucey_2017,Kar_Tulsiani_Carreira_Malik_2015,Pavlakos2017} in order to extract implicit function representations for polygonal obstacles.

\subsection{Preliminary Definitions}
We begin by providing a definition of an R-function \cite{shapiro2007}.
\begin{definition}
A function $\gamma_\Phi:\mathbb{R}^n \rightarrow \mathbb{R}$ is an R-function if there exists a (binary) logic function $\Phi:\mathbb{B} \rightarrow \mathbb{B}$, called the companion function, that satisfies the relation
\begin{equation}
\Phi(S_2(w_1),\ldots,S_2(w_n)) = S_2(\gamma_\Phi(w_1,\ldots,w_n))
\end{equation}
with $(w_1,\ldots,w_n) \in \mathbb{R}^n$ and $S_2$ the Heaviside characteristic function $S_2:\mathbb{R} \rightarrow \mathbb{B}$ of the interval $[0+,\infty)$ defined as\footnote{In \cite{shapiro2007}, it is assumed that zero is always signed: either $+0$ or $-0$, which allows the authors to determine membership of zero either to the set of positive or to the set of negative numbers. This strange assumption is employed to resolve pathological cases, where the membership of zero causes R-function discontinuities and is not of particular importance in our setting.}
\begin{equation}
S_2(\chi) = \left\{ \begin{matrix}
0, \quad \chi \leq -0 \\ 1, \quad \chi \geq +0
\end{matrix} \right.
\end{equation}
\end{definition}
Informally, a real function $\gamma_\Phi$ is an R-function if it can change its property (sign) only when some of its arguments change the same property (sign) \cite{shapiro2007}. For example, the companion logic function for the R-function $\gamma(x,y)=x y$ is $X \Leftrightarrow Y$; we just check that $S_2(xy) = (S_2(x) \Leftrightarrow S_2(y))$.

In this work, we use the following (symbolically written) R-functions \cite{shapiro2007}
\begin{align}
\neg x & :=-x \label{eq:rfunction_negation}\\
x_1 \wedge x_2 & := x_1+x_2-\left(x_1^p+x_2^p\right)^\frac{1}{p} \label{eq:rfunction_conjunction} \\
x_1 \vee x_2 & := x_1+x_2+\left(x_1^p+x_2^p\right)^\frac{1}{p} \label{eq:rfunction_disjunction}
\end{align}
with companion logic functions the logical negation $\neg$, conjunction $\wedge$ and disjunction $\vee$ respectively and $p$ a positive integer. Intuitively, the author in \cite{shapiro2007} uses the triangle inequality with the $L_p$-norm to derive R-functions with specific properties. 

\subsection{Description of the Algorithm}
\label{subsec:rfunction_algorithm}
R-functions have several interesting properties but, most importantly, provide machinery to construct implicit representations for sets built from other, primitive sets. Namely, in order to obtain a real function inequality $\gamma \geq 0$ defining a set $\Omega$ constructed from primitive sets $\Omega_j$, it suffices to construct an appropriate R-function and substitute for its arguments the real functions $\omega_j$ defining the primitive sets $\Omega_j$ implicitly as $\omega_j \geq 0$ \cite[Theorem 3]{shapiro2007}. In our case, the set $\Omega$ would be the star-shaped polygon $\tilde{O}_i^*$ we want to represent, the sets $\Omega_j$ would be half-spaces induced by the polygon edges and the functions $\omega_j:\mathbb{R}^2 \rightarrow \mathbb{R}$ their corresponding hyperplane equations, which are of the form
\begin{equation}
\omega_j(\mathbf{x}) = (\mathbf{x}-\mathbf{x}_{c,j})^\top \mathbf{n}_j \label{eq:hyperplane}
\end{equation}
Here $\mathbf{x}_{c,j}$ is any arbitrary point on the edge hyperplane and $\mathbf{n}_j$ its normal vector, pointing towards the interior of the polygon.


This result allows us to use a variant of the method presented in \cite{shapiro2007} and construct representations of polygons in the form of AND-OR trees \cite{russell_2009}, as shown in the example of Fig. \ref{fig:rfunctions}. Briefly, the interior of a polygon can be represented as the intersection of two or more {\it polygonal chains}, i.e. sequences of edges that meet at the polygon's convex hull. In the same way, each of these chains can then be split recursively into smaller subchains at the vertices of its convex hull to form a tree structure. The root node of the tree is the original polygon, with each other node corresponding to a polygonal chain; the leaves of the tree are single hyperplanes, the edges of the polygon described by functions $\omega_j$. If the split occurs at a concave vertex of the {\it original} polygon, then the subchains are combined using set union (i.e. disjunction); otherwise, they are combined using set intersection (i.e. conjunction), as shown in Fig. \ref{fig:rfunctions}. In this way, by having as input just the vertices of the polygon in counterclockwise order, we are able to construct an implicit representation for each node of the tree bottom-up, using the R-functions \eqref{eq:rfunction_conjunction} and \eqref{eq:rfunction_disjunction}, until we reach the root node of the tree. Since, for our application, we want $\beta_i > 0$ in the exterior of the obstacle $\tilde{O}_i^*$, we negate the result (i.e., we use the R-function \eqref{eq:rfunction_negation}) to obtain the obstacle function $\beta_i$, which is analytic everywhere except for the polygon vertices \cite{shapiro2007}. This implies that our results in Section \ref{sec:geometric_transformation} still hold, with the map $\mathbf{h}$ being a $C^\infty$ diffeomorphism away from the polygon vertices.

We apply the algorithm described above to extract (offline) the AND-OR tree associated with the implicit function $\beta_{0i}$, describing the obstacle $\tilde{O}_i^*$ with its center located at the origin, as shown in Fig. \ref{fig:rfunctions}. Then, when our sensor recognizes $\tilde{O}_i^*$ online and identifies its pose as a rotation $\mathbf{R}_i$ on the plane followed by a translation $\mathbf{x}_i^*$ of its center, we simply find the value of $\beta_i(\mathbf{x})$ at a point $\mathbf{x}$ as $\beta_i(\mathbf{x}) = \beta_{0i}\left(\mathbf{R}_i^\top(\mathbf{x}-\mathbf{x}_i^*)\right)$, by simply invoking the inverse homogeneous transformation that takes the center of $\tilde{O}_i^*$ back to the origin. This allows for efficient online computation of $\beta_i(\mathbf{x})$ as the robot navigates its environment.

\begin{figure}[t]
\centering
\includegraphics[width=\textwidth]{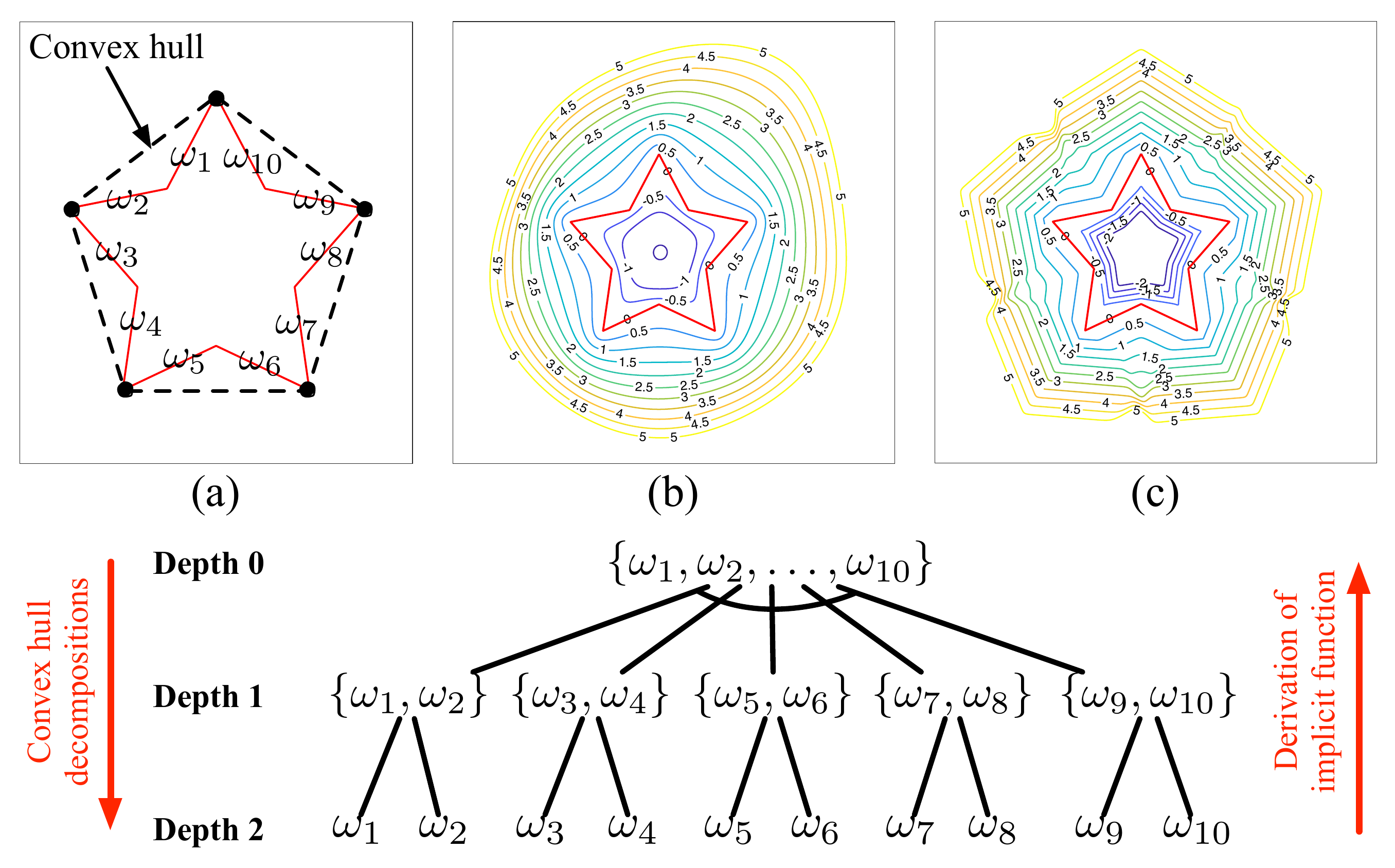}
\caption{Top: (a) An example of a star-shaped polygonal obstacle and the corresponding $\omega_j$ functions, (b) Level curves of the corresponding implicit function $\beta$ for $p=2$, (c) Level curves of the corresponding implicit function $\beta$ for $p=20$, Bottom: The AND-OR tree, constructed by the algorithm described in Section \ref{subsec:rfunction_algorithm} to represent this polygon. The polygon is split at the vertices of the convex hull to generate five subchains at depth 1. Each of these subchains is then split into two subchains at depth 2. The subchains at depth 2 (1) are combined via disjunction (conjunction), since they meet at non-convex (convex) vertices of the original polygon. Following this procedure, we get our implicit function $\beta = \neg \left( (\omega_1 \vee \omega_2) \wedge (\omega_3 \vee \omega_4) \wedge (\omega_5 \vee \omega_6) \wedge (\omega_7 \vee \omega_8) \wedge (\omega_9 \vee \omega_{10}) \right)$.} \label{fig:rfunctions}
\end{figure}

\subsection{R-functions as Approximations of the Distance Function}
It is important to mention that, away from the corners and in a neighborhood of the polygon, {\it normalized} R-functions constructed using \eqref{eq:rfunction_negation}-\eqref{eq:rfunction_disjunction} behave as smooth $p$-th order approximations of the (non-differentiable) distance function to the polygon, as shown in Fig. \ref{fig:rfunctions}-(b),(c). The reader is referred to \cite{shapiro2007} for more details; in our setting, a sufficient condition for normalization is to make sure that for each $\omega_j$ given in \eqref{eq:hyperplane}, the corresponding normal vector $\mathbf{n}_j$ has unit norm \cite{shapiro2007}.

This property is quite useful for our purposes, as it endows the implicit representation of our polygons with a physical meaning; this facilitates the choice of $\varepsilon_i$ for each known star-shaped obstacle $i$, compared to other representations (e.g. the homogeneous function representations in \cite{Rimon_Koditschek_1989}). Finally, the distance-like behavior of the R-functions guarantees that Assumption \ref{assumption:epsilon}-(c) is satisfied for each star-shaped obstacle, as demonstrated in Fig. \ref{fig:rfunctions}. Numerical experimentation showed that even $p=2$ gives sufficiently good results in our setting.

\section{Calculation of $D_\mathbf{x}\xi$}
\label{appendix:calculation_jacobian}

We can calculate
\begin{align}
\frac{\partial [D_\mathbf{x}\mathbf{h}]_{11}}{\partial x} = & \sum_{j=1}^M \left[ 2\sigma_j \frac{\partial \nu_j}{\partial x} + 2(\nu_j-1)\frac{\partial \sigma_j}{\partial x} + 2(x-x^*_j)\frac{\partial \sigma_j}{\partial x}\frac{\partial \nu_j}{\partial x} \right. \nonumber \\
& \left. + (x-x^*_j)\sigma_j\frac{\partial^2 \nu_j}{\partial x^2} + (x-x^*_j)(\nu_j-1)\frac{\partial^2 \sigma_j}{\partial x^2} \right] \\
\frac{\partial [D_\mathbf{x}\mathbf{h}]_{11}}{\partial y} = & \sum_{j=1}^M \left[ \sigma_j \frac{\partial \nu_j}{\partial y} + (\nu_j-1)\frac{\partial \sigma_j}{\partial y} + (x-x^*_j)\frac{\partial \sigma_j}{\partial y}\frac{\partial \nu_j}{\partial x} + (x-x^*_j)\sigma_j\frac{\partial^2 \nu_j}{\partial x \partial y} \right. \nonumber \\
& \left. + (x-x^*_j) \frac{\partial \sigma_j}{\partial x} \frac{\partial \nu_j}{\partial y} +(x-x^*_j)(\nu_j-1)\frac{\partial^2 \sigma_j}{\partial x \partial y} \right] \\
\frac{\partial [D_\mathbf{x}\mathbf{h}]_{12}}{\partial x} = & \sum_{j=1}^M \left[ \sigma_j \frac{\partial \nu_j}{\partial y} + (x-x^*_j)\frac{\partial \sigma_j}{\partial x} \frac{\partial \nu_j}{\partial y} + (x-x^*_j)\sigma_j\frac{\partial^2 \nu_j}{\partial x \partial y} \right. \nonumber \\
& \left. + (\nu_j-1)\frac{\partial \sigma_j}{\partial y} + (x-x^*_j) \frac{\partial \sigma_j}{\partial y} \frac{\partial \nu_j}{\partial x} + (x-x^*_j)(\nu_j-1)\frac{\partial^2\sigma_j}{\partial x \partial y} \right] \\
\frac{\partial [D_\mathbf{x}\mathbf{h}]_{12}}{\partial y} = & \sum_{j=1}^M \left[2(x-x^*_j)\frac{\partial \sigma_j}{\partial y} \frac{\partial \nu_j}{\partial y} + (x-x^*_j)\sigma_j\frac{\partial^2 \nu_j}{\partial y^2} \right. \nonumber \\
& \left. + (x-x^*_j)(\nu_j-1)\frac{\partial^2 \sigma_j}{\partial y^2} \right] \\
\frac{\partial [D_\mathbf{x}\mathbf{h}]_{21}}{\partial x} = & \sum_{j=1}^M \left[2(y-y^*_j)\frac{\partial \sigma_j}{\partial x} \frac{\partial \nu_j}{\partial x} + (y-y^*_j)\sigma_j\frac{\partial^2 \nu_j}{\partial x^2} \right. \nonumber \\
& \left. + (y-y^*_j)(\nu_j-1)\frac{\partial^2 \sigma_j}{\partial x^2}  \right] \\
\frac{\partial [D_\mathbf{x}\mathbf{h}]_{21}}{\partial y} = & \sum_{j=1}^M \left[\sigma_j \frac{\partial \nu_j}{\partial x} + (y-y^*_j)\frac{\partial \sigma_j}{\partial y}\frac{\partial \nu_j}{\partial x} + (y-y^*_j)\sigma_j\frac{\partial^2 \nu_j}{\partial x \partial y} \right. \nonumber \\
& \left. + (\nu_j-1)\frac{\partial \sigma_j}{\partial x} + (y-y^*_j) \frac{\partial \sigma_j}{\partial x} \frac{\partial \nu_j}{\partial y} + (y-y^*_j)(\nu_j-1)\frac{\partial^2 \sigma_j}{\partial x \partial y}  \right] \\
\frac{\partial [D_\mathbf{x}\mathbf{h}]_{22}}{\partial x} = & \sum_{j=1}^M \left[ \sigma_j\frac{\partial \nu_j}{\partial x} + (\nu_j-1)\frac{\partial \sigma_j}{\partial x} + (y-y^*_j)\frac{\partial \sigma_j}{\partial x}\frac{\partial \nu_j}{\partial y} + (y-y^*_j)\sigma_j\frac{\partial^2 \nu_j}{\partial x \partial y} \right. \nonumber \\
& \left. + (y-y^*_j)\frac{\partial \sigma_j}{\partial y}\frac{\partial \nu_j}{\partial x} + (y-y^*_j)(\nu_j-1)\frac{\partial^2 \sigma_j}{\partial x \partial y} \right] \\
\frac{\partial [D_\mathbf{x}\mathbf{h}]_{22}}{\partial y} = & \sum_{j=1}^M \left[ 2\sigma_j \frac{\partial \nu_j}{\partial y} + 2(\nu_j-1)\frac{\partial \sigma_j}{\partial y} + 2(y-y^*_j)\frac{\partial \sigma_j}{\partial y}\frac{\partial \nu_j}{\partial y} \right. \nonumber \\
& \left. + \sigma_j(y-y^*_j)\frac{\partial^2 \nu_j}{\partial y^2} + (y-y^*_j)(\nu_j-1)\frac{\partial^2 \sigma_j}{\partial y^2} \right]
\end{align}
In the expressions above, we use elements of the Hessians
\begin{align}
\nabla^2 \sigma_j(\mathbf{x}) = & \eta''(\beta_j(\mathbf{x}))(\nabla \beta_j(\mathbf{x}))(\nabla \beta_j(\mathbf{x}))^\top + \eta'(\beta_j(\mathbf{x}))\nabla^2\beta_j(\mathbf{x}) \\
\nabla^2 \nu_j(\mathbf{x}) = & \frac{3 \rho_j}{||\mathbf{x}-\mathbf{x}^*_j||^5}(\mathbf{x}-\mathbf{x}^*_j)(\mathbf{x}-\mathbf{x}^*_j)^\top - \frac{\rho_j}{||\mathbf{x}-\mathbf{x}^*_j||^3}\mathbf{I}
\end{align}
Eventually we can calculate $v D_\mathbf{x}\xi \begin{bmatrix} \cos\psi & \sin\psi \end{bmatrix}^\top$, used in \eqref{eq:reference_input_2}, as follows 
\begin{equation}
v D_\mathbf{x}\xi \begin{bmatrix} \cos\psi \\ \sin\psi \end{bmatrix} = \frac{(\alpha_1\beta_1+\alpha_2\beta_2)v}{||\mathbf{e}(\mathbf{x},\psi)||^2} 
\end{equation}
with
\begin{align}
\alpha_1 = & -([D_\mathbf{x}\mathbf{h}]_{21}\cos\psi + [D_\mathbf{x}\mathbf{h}]_{22}\sin\psi) \\
\alpha_2 = & [D_\mathbf{x}\mathbf{h}]_{11}\cos\psi + [D_\mathbf{x}\mathbf{h}]_{12}\sin\psi \\
\beta_1 = & \frac{\partial [D_\mathbf{x}\mathbf{h}]_{11}}{\partial x} \cos^2\psi + \left( \frac{[D_\mathbf{x}\mathbf{h}]_{11}}{\partial y} + \frac{[D_\mathbf{x}\mathbf{h}]_{12}}{\partial x}\right) \sin\psi \cos\psi + \frac{\partial [D_\mathbf{x}\mathbf{h}]_{12}}{\partial y} \sin^2\psi \\
\beta_2 = & \frac{\partial [D_\mathbf{x}\mathbf{h}]_{21}}{\partial x} \cos^2\psi + \left( \frac{[D_\mathbf{x}\mathbf{h}]_{21}}{\partial y} + \frac{[D_\mathbf{x}\mathbf{h}]_{22}}{\partial x}\right) \sin\psi \cos\psi + \frac{\partial [D_\mathbf{x}\mathbf{h}]_{22}}{\partial y} \sin^2\psi
\end{align}

\end{document}